\newcommand\hlight{\bgroup\markoverwith
  {\textcolor{yellow}{\rule[-.5ex]{2pt}{2.5ex}}}\ULon}
\newtheorem{lem}{Lemma}
\newtheorem{thm}{Theorem}
\newtheorem{defn}{Definition}
\title{An Efficient Approach to Communication-aware Path Planning for Long-range Surveillance
Missions undertaken by UAVs}
\author{Hrishikesh Sharma and Tom Sebastian}
\date{\today}
\def\@maketitle{
\begin{center}
{\Huge \bfseries \sffamily \@title \par}~\\[32ex] 
{\Large \textit{Technical Report by}}\\[8ex]
{\Large  \@author}\\[16ex] 
\includegraphics[scale=0.3]{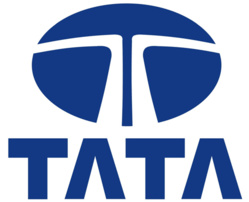}\\[8ex]
\@date\\[8ex]
\end{center}
}
\begin{document}

\maketitle
\thispagestyle{empty}
\newpage

\setcounter{page}{1} 

\begin{abstract}
While using drones (UAVs) for remote surveillance missions, it is mandatory
to do path planning of the vehicle since these are pilot-less vehicles. Path
planning, whether offline (more popular) or online, entails setting up the
path as a sequence of locations in the 3D Euclidean space, whose
coordinates happen to be latitude, longitude and altitude. For the specific
application of remote surveillance of long linear infrastructures in
non-urban terrain, the continuous
3D-ESP (Euclidean Shortest path) problem practically entails two important scalar
costs. The first scalar cost is the distance travelled along the planned
path. Since drones are battery operated, hence it is needed that the path length between fixed
start and goal locations of a mission should be minimal at all costs. The
other scalar cost is the cost of transmitting the acquired video during the
mission of remote surveillance, via a camera mounted in the drone's belly.
Because of the length of surveillance target which is \textit{long} linear
infrastructure, the amount of video generated is very high and cannot be
generally stored in its entirety, on board.
If the connectivity is poor along certain segments of a naive path, to boost video
transmission rate, the transmission power of the signal is kept high, which
in turn dissipates more battery energy. Hence a path is desired that
simultaneously also betters what is known as ``communication" cost.  These
two costs trade-off, and hence Pareto optimization is needed for this 3D
biobjective Euclidean shortest path problem. In this report, we study the
mono-objective offline path planning problem, based on the distance cost, while posing the
communication cost as an upper-bounded constraint. The biobjective path planning solution
is presented separately in a companion report.
\end{abstract}

\tableofcontents
\listoffigures

\chapter{Introduction}
In recent times, there has been a great demand of UAVs in numerous
civil industrial and military operations around the world. UAVs are often
deployed for missions that are too \underline{"dull, dirty, or dangerous"}
for manned aircraft\cite{cov_pplan_pap}. One of such civilian application
of UAVs corresponds to remote surveillance of long linear infrastructures.
Such (utility) infrastructures typically include power grids, oil and gas
pipelines, railway corridors etc. The area of surveillance for such targets
is typically vast, running into tens of kilometers. Hence mission planning
for such applications has received separate attention. Other than vastness,
such infrastructures also tend to run through \textbf{complex} remote
terrains, such as forests, hilly regions, wetlands etc, some of
which can be no-fly zones. Such terrains are
mostly non-urban. Hence remote surveillance using UAVs has emerged as a
cost-effective and efficient solution for monitoring of such
infrastructures \cite{ncc_pap}.

\begin{figure}[!h]
\begin{center}
\includegraphics[scale=.6,keepaspectratio=true]{./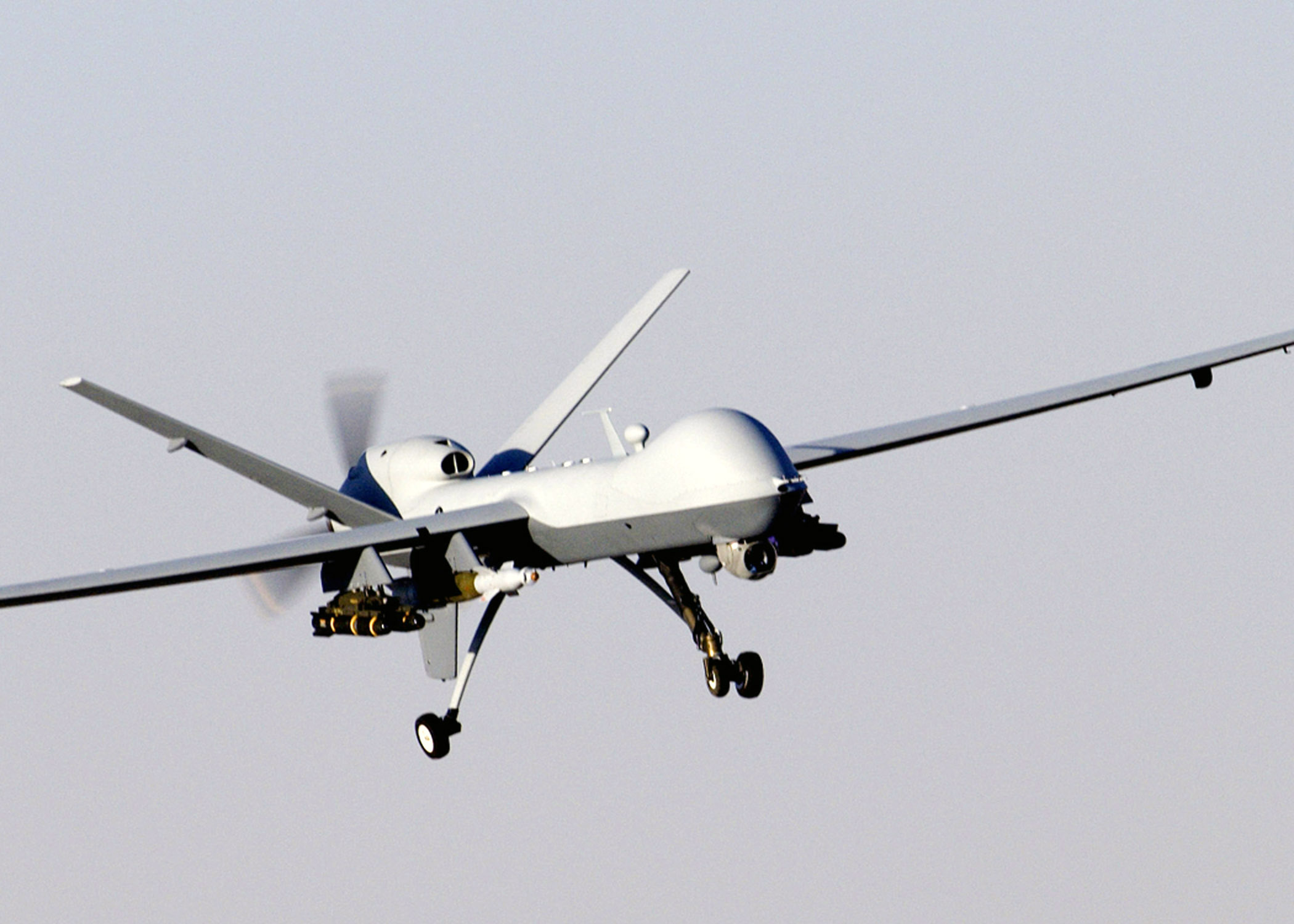}
\end{center}
\caption{An Illustrative Drone in Operation}
\label{drone_fig}
\end{figure}

\section{Requirement for Live Video Communication}
\label{live_sec}
It is imperative that surveillance mission of linear infrastructures
results in acquisition of very long and bulky videos. For lack of unlimited
storage on-board, or in case of requirement of quick response such as
supply breakdown, it is important to transmit the video along the flight.
More precisely, often the surveillance video or the sensor data gathered
must be transmitted continuously from the UAV to one or more ground control
stations (GCS). In disaster and breakdown management, real-time sensor data
acquisition and analytics is required to be carried out, before the damage
becomes widespread. Further, on long duration flights, which are now quite
practical \cite{solar_uav}, one cannot store entire high volume of video
data, typically running in terabytes, and hence needs to transmit it out most of the while. As this data can
include urgent high-volume data such as live video, especially in case of
long linear infrastructures, maximally uninterrupted high-quality (SNR)
connectivity downlink from the UAV to GCS is often required. The required
uninterrupted connectivity in remote rural terrain, which may pass through
forests, is practically never there. In fact, availability of a carrier by
whichever means (RF/Wi-Fi/Public Cellular) is known to be sparse and hence
intermittent, in lack of sufficient number of transmitters installed and
active in such terrains.  Such connectivity can be provided using satellite
links, but data rate of such links is well-known to be slow. Hence for such
surveillance mission, it makes a lot of sense to perform path planning
that additionally factors in the various degrees of available signal
coverage, so as to maximize the amount of video transmitted.

\section{Path Planning as an Optimization Problem}

\shadowbox{
\begin{minipage}{\textwidth}
\begin{defn}
A path planning algorithm is one that
{\it{\textbf{\underline{computes a trajectory}}}} from the UAV's present
location to a desired future location, e.g. an intermediate
or final target\cite{ppu}. The computation is such that it optimizes
certain cost of the path so evolved.
\end{defn}
\end{minipage}
}

In general, in any space, very large number of solutions/paths exist
between a source location and a target/destination location. Sometimes the
number of such paths can be countably infinite. However, in reality, we
tend to choose paths that suit our specific requirements. These
requirements manifest themselves as optimization criteria, entailing
certain costs. Hence any such computation of path is done so as to balance
various costs attached with executing the mission, e.g. battery power,
total distance traveled etc.

Coming to the case of UAVs, path planning of a UAV almost always boils
down to optimizing the path length. This is because UAVs are battery
operated. Hence, especially for long linear infrastructures, it has always
made sense to have a path that can provide maximum-distance/maximum-length
coverage of the near-linear/piecewise-linear infrastructural corridor. Such
corridors typically have length which can be many
times multiple of the maximum path length that a specific battery for a
specific class of UAV can offer.

However, as argued earlier, for certain applications, one also needs to
maximize connectivity for communication along the same path. This is
especially true for the application of remote surveillance of long linear
infrastructures. This suggests that the path planning problem could be cast
as a \textbf{biobjective} optimization problem, in 3D Euclidean space, for
UAVs.

Still, if the path planning is done in a \textbf{naive}, mono-objective
fashion for such applications, then it can happen that the UAV may end-up
at big-enough coverage hole area. To be able to still ensure that storage
for collected/sensed data does not overflow, the UAV will then need to find
alternative route, in an online fashion. It is a known fact that any online
path planning in a space whose topology is static yields poorer
results/cost than offline planning. Thus a naive approach is bound to
result in longer mission time. It will also waste valuable battery
resource, since algorithm execution consumes power at runtime.
Even further, the slowness of algorithm may not allow UAV to fly
at its peak speed, since next step has to be decided first before the step
is taken by the UAV. Finally, the solution will be poorer since the solution will be locally
optimal (finds a detour near obstacle only when the vehicle is close to
it), not globally optimal as off-line one. Since the topology of the solution design space is almost completely known
a-priori (no-fly zones/obstacles, wireless coverage holes, terrain,
surveillance object's location etc.), off-line design is not only equally
possible, but also avoids the above disadvantages of the online algorithm.

On the other hand, if the path planning is done such that both objectives
are satisfied to the extent possible together, then in best case, one may
be able to get all-time connectivity along the path. If not, even then, one
can get good connectivity for most length along the path, which can lead to
effective transmission rates most of the while \cite{ghaffar_2014_pap}.
This not only leads to efficient resource utilization (even signal
transmission power is derived from battery power), but also in case of
emergency, the report-response time can be minimized.

In this technical report, we provide the design details of an offline
heuristic algorithm for path planning with such communication requirement.
The path planning problem is tailored for the required surveillance task,
and hence favors its deployment on fixed-wing UAVs. However, as can be seen
later on, by trivializing the maximum banking angle and minimum route leg
length constraints, the same algorithm can also be deployed on commonplace
rotary-wing UAVs as well.

\chapter{Prior Literature}

The prior literature in last few decades till date has mostly focused on
traditional path planning. Further, this planning is either agnostic of
terrain, or has concerned mainly with Urban areas. Such literature limits
itself to consideration of distance covered as cost.

In the plethora of prior works on communication-agnostic path/route planning, A* has been the popular workhorse algorithm, in a
discrete navigation space \cite{goerzen2009survey}. However, especially for route planning with turning
constraints, A* is known to be inefficient \cite{victor_thesis}.
In another
solution \cite{turn_transform_pap}, a graph transformation is performed first,
followed by a path search on the transformed graph. But such algorithm is
inefficient for simple reason: in a 2D grid for example, there are
typically at maximum 1 neighbor left post transform for each node, since
the turn limit is generally about 30$^{\circ}$ \cite{turn_2010_pap},
\cite{angle_pplan_pap}, \cite{haz_pap}. On such a sparse graph, the
discretization
offset/error is very high. \cite{haz_pap}
solves this problem using \textit{Bellman-Ford} algorithm, which is known
to have a much higher complexity (O($|$E$|\cdot|$V$|$). In yet another
related work \cite{windy_plan_pap}, angle-constrained motion planning is
carried out along with other kinematic constraints arising out of high
windy conditions. However, due to dynamic change in the lift, the turn
limit itself is a variable. Towards minimization of the
\textit{alternative} aspect of number of turns, few solutions are provided in 
\cite{fewer_turns_pap}, \cite{cov_pplan_pap}. However, it is shown 
that such paths can be up to 22\% longer than ideal
solution on an average, even in 2D case.

The communication cost has in past been sometimes considered in path
planning problems that relate to robotic path/motion planning. In all these
works \cite{ghaffar_2011_pap}, \cite{ghaffar_2011_motion_pap},
\cite{adaptive_muav_pap}, \cite{zavlanos_pap}, \cite{mostofi_2008_pap},
\cite{ghaffar_2009_pap}, \cite{ghaffar_2010_pap}, \cite{sigstren_est_pap},
this cost component is modeled as the \textbf{receiver-end} probability of
connectivity of the nodes to the base station at all the times, which has
to be maximized.  Hence this cost is expressed as random variables arising
in physical layer function: estimated packet error rate, estimated queue
size, at various moments of time. Almost all these works consider multiple
robots in action, rather than a single robot, which further involves robots
themselves acting as relays. In a single machine case that of a single
surveillance UAV, there is no possibility of relaying.

The problem with receiver-end modeling is that the effect of communication
channel in between has also to be factored. The wireless communication
channel is inherently non-deterministic in terms of its gain. Hence to
factor its effect on effective reception rate entails usage of
probabilistic, or more precisely, stochastic models. 

The main reason behind
high degree of uncertainty in reception rate while transmission happens
from a moving vehicle is because of multipath effect
\cite{ghaffar_2009_pap}.  Due to multipath, problem is at a short distance
from a sampling point in space, the SNR of sampled carrier signal may
differ widely \cite{magnus_thesis}. One possibility is to have a very find
sampling grid, so that the SNR variation appears gradual when viewed on a
discrete graph/grid. However, practically, one cannot do dense sampling for
the specific application of ours, given the actual size of the Euclidean space in
which path planning has to happen (tens/hundreds of kilometers).

Thankfully, for the application we are interested in, we do not require
full-fledged stochastic modeling of the channel. 
This is because of
non-urban, remote terrain, where multipath effects or small-scale fading is
almost non-existing. The other two fading components, namely the
medium-scale and large-scale fading, can be predicted using deterministic
signal propagation modeling, to a good degree of approximation
\cite{ncc_pap}. Online learning of signal quality maps
\cite{decentralized_mostofi_pap} is not worth additional online computation
and thus spending some more battery power, since with fixed
topology/locations of base stations, and fixed terrain, and quasi-static
number of users/quasi-static traffic pattern at a base station, channel
characteristics aren't expected to change heavily with time.

The work in \cite{zavlanos_pap} further considers a situation where the
information generation rate itself is dynamic and has to be modeled
separately, something that does not hold good in our case. Using rough
calculations, a fixed, average frame rate such as 30 fps has been found
good enough to capture videos that we require, with significant overlap of
views captured between two successive frames. Hence, in summary, from prior
work \cite{ncc_pap}, we assume a-priori presence of connectivity maps which
are assumed to be quasi-stationary.

Also, camera as a visual sensor differs from other sensors in the sense
that typically the distance from the object being imaged does not matter,
as long as the distance between object and the camera is lesser than
\textbf{camera field of view depth} \cite{3dcam_pap}. Hence approaches such
as \cite{hyun_pap} become irrelevant, which take distance between UAV and
surveillance object into account. Similarly irrelevant are approaches that
plan for moving targets, not just moving remote sensing platform i.e.
dynamic coverage \cite{mostofi_2014_pap}, \cite{ghaffar_central_pap}.

\chapter{Problem Restriction}
Most real life problems which are modeled and solved mathematically, are
first solved in a restricted sense. That is, certain part of details are
omitted for various reasons, and the rest of the problem is solved. The
details that are ignored must be such that the loss of accuracy of solution
can at least be shown to be restricted.

\section{Restricting Consideration of UAV Motion}
The prior work on UAV path planning is mostly organized in two classes.
The first class, which views the optimization problem as an optimal flight
control problem, retains the details pertaining to the vehicle's movement.
However, it typically abstracts out second-order and higher kinematical
effects from the modeling, via a model called \textit{Dubin's vehicle}
\cite{kaarthik_thesis}. This class of problem is known as the
\textbf{Motion Planning} problem. On the other hand, the second class
separates out the vehicle control and optimization concerns. The
corresponding optimization problem for the latter concern is known as the
\textbf{Route Planning} problem.

It has been provably brought out in \cite{ghaffar_2012_pap},
\cite{ghaffar_2014_pap} that the optimal velocity for each mobile agent is
the maximum possible velocity. Further, the cost of such mission is similar
to that of a slower-speed mission undertaken by the mobile agent. If we do
not exploit this fact, then, in a general setup, a path of any moving system in earth's
gravitational field is a \textbf{continuous space-time curve} obeying the
Newtonian laws of motion anyway. Space-time path optimization problems are
known to be hard. Hence we ignore time as a variable. but retain location
as a (continuous) variable, in the tractable model of path planning. Thus
the motion dynamics is ignored in our approach to solution, and we focus on
route planning rather. As such, assumption of such
constant (maximum) speed decouples the motion and route planning parts
of path planning. Similar philosophy is also
followed in the popular communication-agnostic 2-phase decoupled trajectory planning approach
\cite{goerzen2009survey,spline_pp_pap, 2ph_pap}, where the factoring of
dynamic constraints is pushed to $2^{nd}$ phase, while focussing on route
planning in $1^{st}$ phase. Hence we have currently avoided dealing with Dubin's path
design or motion primitive-based design, towards (direct) motion planning \cite{pp_icuas16_pap}.

Even when it comes to route planning, finding an optimal solution is NP-hard, due to presence of obstacles
\cite{3dhard_pap}. Hence two philosophical lines of solution exist:
combinatorial and sampling-based. Especially in motion planning, randomized
sampling based approaches are preferred \cite{spline_pp_pap},
whenever the solution space is 
high-dimensional. In the premises of our problem, however, the
turn-angle constraint being holonomic, the solution space is practically low-dimensional. Further, we impose five constraints in
the overall problem. Hence size of search space
becomes quite tractable, along with its low dimensionality. Hence
it is more prudent to follow the combinatorial, greedy approach
where optimality is not asymptotic, unlike randomized sampling-based
algorithms e.g. RRT$^*$, PRM$^*$, FMT$^*$ etc., as described below.

\section{Restriction against Continuous Optimization}
\label{discrete_opt_sec}
With regard to optimization problems, another important modeling decision
has to be taken before the problem model is specified. This decision
pertains to modeling the problem as a discrete or continuous optimization
problem.

A number of reasons have been cited in various relevant literature, which
in a majoritarian way, argue in favor of modeling the problem as a discrete
optimization problem. Some of the reasons are
\begin{itemize}
\item Even though the number of decision variables are less, the graph size
(or more precisely, the size of feasible region), is quite big, given the
physical vastness of surveillance area.

\item Even if the shortcomings described above could be overcome, any
continuous routing model that produces routes having smooth curves probably
produces routes that are unflyable by a human pilot or a human UAV
controller. Thus, \cite{mil_route_pap} concludes that continuous routing
models are unsuitable for use in autorouters.
\item  Most military applications use Digital Terrain Elevation Data (DTED)
as input to their route planners \cite{real_pp_pap}, \cite{3d_pplan_pap}.
DTED is grid-based in nature, and hence practically most route planners are
anyway grid-based.
\item The loss of accuracy in case of discretization is not very high. For
the popular route planning algorithm proposed in \cite{lazy_theta_pap}, the
shortest paths formed by the edges of 8-neighbor 2D grids are at maximum
$\approx$ 8\% longer than the shortest paths in the continuous environment.
In the same paper, it is further proven that the shortest paths formed by
the edges of 26-neighbor 3D grids can at maximum be $\approx$ 13\% longer
than the shortest paths in the continuous environment.
\end{itemize}

Discretization has a side effect/disadvantage that any concatenation of
straight segments, with only few different angles of possible movement to
neighboring cells, in a fixed-topology grid, does not fully utilize the
flight capabilities of the air vehicle. For example, in a
square/rectangular grid, the theoretically-allowed turns are only
restricted to either 45$^{\circ}$ or 90$^{\circ}$, to the left or right of
the current cell/location. While the shape of grid cell can be changed
\cite{cell_pap}, another possible option is to have a
\textit{random-topology} grid, which is non-uniform, generated by some
random process \cite{3drisk_pap}. A very popular non-randomized solution is
known as any-angle path planning in a non-random grid. Our solution, as
will be detailed later, is derived from that route planning strategy.

\chapter{Problem Modeling}
Formal specification of any optimization problem entails specification of
the decision variables, the cost/objective as a function of these
variables, and constraints wherever applicable \cite{nocedal_book}. For the
route planning problem, these parts are detailed as below.

\section{From Motion Planning to Route Planning}

The cameras that are practically used for aerial surveys either do not have
multiple frame rates, or cannot be configured dynamically. As such also,
there is no need to dynamically change frame rate, since there is
sufficient overlap of scene of the remote-surveyed object across successive
frames, and there isn't any need to speed up or slow down the frame rate
than typical frame rate either. Once frame rate is fixed, then the other
reason why a UAV must speed up or slow down would depend if the sensing and
transmitting rates differ (information generation vs information
communication rates). Our algorithm is specifically designed in a way that
it can handle large communication holes. Hence, once more, there is no
specific need to speed up or slow down of UAV during mission either. This
leads to constant-speed situation: in which case, one must follow just on
the route planning, rather than simultaneously also planning for motion
dynamics. Once it comes to this scenario, it is shown in
\cite{ghaffar_2014_pap} that the best possible UAV speed while
simultaneously optimizing for distance and communication costs is the
\textit{maximum possible speed} of the UAV.

If ever one has to do motion planning, then one may still consider
\cite{al_theta*_pap}, \cite{aal_theta*_pap} algorithms, which do not entail
first or higher order dynamical system of equations. Similar is the
algorithm Kinematic A* proposed in \cite{pplan_graph_pap}, which goes via
offline modeling of speed-dependent movement cost component between two
adjacent nodes of the graph. In fact, Dynamic A* algorithm proposed in
\cite{dy_star_pap} also follows same philosophy. \cite{turn_2010_pap} goes
a step further and provides motion planning in wake of a steady-state flow
field, typically modeling the wind.

\section{Decision Variables}
There is only one set of decision variables involved. This set is the 3D
\textit{instantaneous} location of the UAV: $\langle x,y,z\rangle$. If the altitude is
held/assumed constant, then the location is a 2-tuple, formed by latitude
and longitude in world coordinate system. This tuple influences various
cost components to be described later, including the sensing cost that we
have not considered in this work.

\section{Constraints}
Most of the constraints for the problem are fairly the same across
literature, and are rooted in physical control of the unmanned vehicle
\cite{3d_heur_pap}, \cite{sparse_a*_pap}, \cite{lowpenetration_pap}. We
also impose certain constraints on route planning. For more intuition about
the constraints, one may refer to \cite{real_pp_pap}.

\subsection{Minimum Route Leg Length}
\label{min_rleg_sec}
This constraint forces the route to be straight segment, for a
predetermined minimum distance, before initiating a turn. Aircraft traveling
long distances avoid turning constantly because it adds to pilot fatigue
and increases navigational errors. Such constraint generally occurs in
conjunction with the turn angle constraint described next.

\subsection{Maximum Turning Angle}
\label{max_tangle_sec}
This constraint forces the generated route to make turning manoeuvre, less
than or equal to a predetermined maximum turning angle. Such constraint is
meant for fixed-wing vehicles, which is what we have assumed for our
application. The actual kinematics behind this constraint is explained in
\cite{haz_pap}.



\begin{figure*}[!h]
\begin{center}
\subfloat{\label{an_1}\includegraphics[scale=.66]{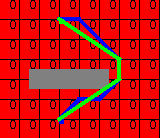}}
\qquad
\subfloat{\label{an_2}\includegraphics[scale=.66]{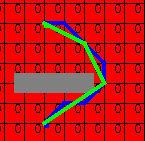}}
\caption{Illustrative Shortest Euclidean Paths with Angle Constraint. Green
Line Depicts the Finalized Path}
\label{angle_fig}
\end{center}
\end{figure*}

\subsection{Minimum Separation}
\label{min_sep_sec}
This is a very unique constraint for our application. All linear
infrastructures, across all nations, require that no man, machine or any
other artificial system ever intrude within a pre-specified vicinity of the
infrastructure. Such restricted area is called \textbf{Right of Way}. In
case of unmanned flight, flying too close to the infrastructure and sudden
control failure of the UAV (unmanned vehicles are fairly prone to sudden
death) can also mean that the vehicle or a part of it falls on the
infrastructure and damages it, which may lead to interruption of the
critical supply that that particular infrastructure bears. Further,
specifically in the case of power grid corridor, flying too close to HV
transmission lines leads to electromagnetic interference between the
electrical circuits of the vehicle, and the power line. Such interference
has led to corruption of internal flight control data over the internal
wires, and eventual failure/dropping dead of few pilot flights for us.

A closely related but separate constraint arises from the fact that at a
nominal speed of UAV, and a nominal video frame rate, there is significant
overlap between the length of linear infrastructure captured between any
two successive frames. This overlap will only reduce if the camera comes
closer to the infrastructure. However, the legally enforced constraint of
right of way according to us is more sacrosanct, since it relates
to system safety. Hence we do not take a
minimum of these two lower bounds, but just stick to the right of way
constraint only. Overall, from our view, it is of high importance that such
a constraint on route design be enforced.

\subsection{Maximum separation}
\label{max_sep_sec}
This is yet another unique constraint arising from the fact that a camera
mounted in the belly of the UAV is being used to perform the remote
surveillance task. Any visible range digital camera, using any technology
has a performance limit, fundamentally due to its usage of lens system, and
a finite-sized CCD/CMOS backplane. It is well-proven  \cite{3dcam_pap} that
if the distance between the object being imaged, and a digital
visible-range camera is more than a specific distance known as
\textbf{Depth of Field}, then the object cannot be properly imaged/will be
as hazy as underlying background, typically earth's surface/terrestrial
terrain. The application demands that the foreground object of
interest be reasonably sharp and distinct from background \cite{iciar_pap},
for computer vision techniques to be applied for segmenting out thinner
infrastructures against the vast and wide background.

A closely related
but separate constraint arises from the fact that certain foreground object
of interests e.g. power transmission lines are fairly thin. In such case,
to avoid they being imaged at sub-pixel granularity, the camera must stay
within a fixed distance from the infrastructure/object at all times.

In summary, once again,
from our view, it is of high importance that such a constraint on route
design be enforced.

\subsection{Storage Constraint}
\label{max_store_sec}
This is the third unique constraint arising due to the specific application
of ours. Since we are considering communication cost as a component, there
are expected to be regions which are no-coverage zones. These regions arise
since there is no base station in their vicinity. More specifically, in
practical scenarios, most wireless receivers cannot recover signal if the
SNR is too low (lower bound). Since there is a single receiver, and single
baseband processing unit involved (no MIMO case), SNR thresholding using
lower bound implies thresholding of Tx-Rx distance using upper bound
(ignoring proportionality constants). In turn, this implies that if closest
access point is situated at a distance beyond $d_{minSNR}$, then the UAV is
in \textbf{zero coverage} zone. Such a situation can also arise if
instead of omni-directional antennas, sector antennas are mounted on the
base station.

\begin{figure}[!h]
\begin{center}
\includegraphics[scale=.6,keepaspectratio=true]{./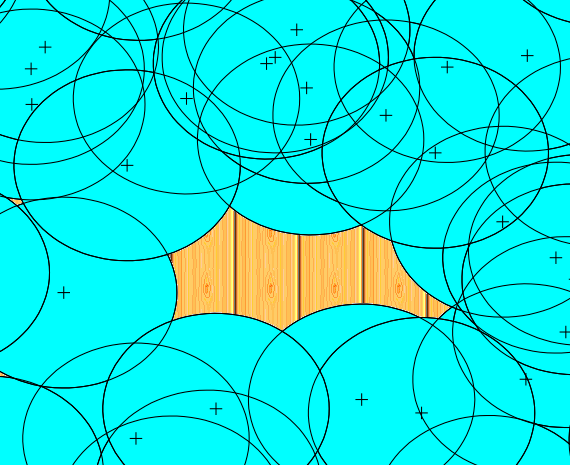}
\end{center}
\caption{A Coverage Hole in Presence of Multiple Receivers}
\label{hole_fig}
\end{figure}

It is not possible to altogether avoid these regions while designing an
optimal path. We are having two cost components as described later in this
chapter, and these components trade off with each other in a
\textit{Pareto} sense. Hence it is possible that while passing through a
zero-coverage zone, the path designed saves a bit on distance cost, while
losing a bit on communication cost.

While flying through such regions if the designed route leads to so, we
have discussed in section \ref{live_sec} that the continuous video being
shot has to be stored on-board. The stored video can be transmitted later,
when good SNR conditions prevail. However, the storage is not unlimited. If
the storage becomes full, then overflow will happen, resulting in a part of
video not being available later for any analysis. The filling will happen
since the continuous video acquisition leads to terabytes of data
generation, while the storage limit would typically be much smaller than
that. Since the application of
ours mostly deals with fault/damage identification, any missing part in
video capture is highly undesired \cite{dicta_pap}. In order to avoid
overflow while storing, one can force the path planning algorithm to only
consider options where the segment of the path lying in a specific region
has a length that is upper bounded by some constant. This constant
(distance) is decided based on the (static) UAV speed, the frame rate of
the video camera, the frame size, and the amount of on-board storage. For
the entire path, if there are `2$\ast$Z' locations along the path, every
successive pair of which implies a zero coverage zone, then

\[ \mathbf{\forall i \in \left\{1,\ldots,Z\right\}:\;\|P_{z_{2\cdot
i}}-P_{z_{(2\cdot i-1)}}\| \leq d_{zero}} \]

Hence, once again, from our view, it is of high importance that such a
constraint on route design be enforced.

\section{Cost Modeling}
As mentioned earlier, we are interested in simultaneously optimizing for
two scalar costs. The costs pertain to distance and communication costs. In
this report, we have chosen to model only distance cost, and posed the
communication cost as an upper-bounded constraint rather. Hence modeling of
the communication cost is presented elsewhere, in a companion report on
biobjective path planning.


\subsection{Distance Cost}

This cost is a \textbf{mandatory} cost component in all route planning
algorithms that have arisen till date. It represents the length, in
Euclidean space, of the optimal path undertaken by the vehicle. As is
obvious, this cost directly correlates to the limited battery power of the
UAV, which are not fuel-driven. At a certain speed of the vehicle, against
an assumed constant wind speed, the energy of a charged battery will last
only a certain distance. To try maximize the mission, given that
application required tens of kilometers to be remotely surveyed, the
distance cost has to be minimized. In a discrete graph model of the
navigation space, this reduces to finding the shortest path in Euclidean
space (Euclidean shortest path). Such problems are fundamental to the area
of Computational Geometry!

\begin{figure}[!h]
\begin{center}
\includegraphics[scale=.4,keepaspectratio=true]{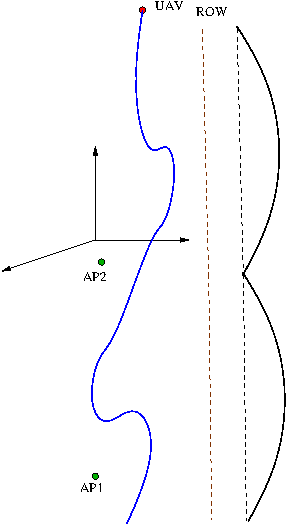}
\end{center}
\caption{A Model for Distance Cost in 2D space. AP represents Communication
Access Point.}
\label{alg_fig}
\end{figure}

For any path in discrete grid, piecewise-linear path assumption is valid
assumption, also because of $L_{min}$ constraint. In such case, insertion
of additional variables to denote the UAV turning points, and sum of
Euclidean distances between such successive turning points, is a direct
measure of such cost.

Let $P_{T_i}, i \in 1,\ldots,T$ be the coordinate tuple of each
turning point along the path in some coordinate system, e.g. world
coordinate system (WCS). Then,
\[\mbox{Path Cost}\;=\;\sum_{i=1}^{T-1}\|P_{T_{i+1}} - P_{T_i}\| \]
where $\|\cdot\|$ denotes an appropriate (e.g., L2) norm.

Insertion of additional variables this way implies we also need to add
\textit{consistency constraints}. Observe that the UAV coordinates $\langle
x_U,y_U,z_U\rangle$, do coincide with the coordinates of turning points at
certain points in time.  Between successive turning points, the UAV
coordinates lie on a straight line.  If certain pair of successive turning
points is within the corridor of flying spanned by ROW (minimum separation)
and CDOF (maximum separation) constraints, then all points along the line
joining this pair will be obeying these twin constraints automatically. If
ALL pairs of successive turning points satisfy ROW and CFOD constraints,
the entire path, define as a sequence of UAV position coordinates, will
satisfy these twin constraints.  Hence we can eliminate variables $x_U$,
$y_U$ and work instead with only coordinates of turning points, $P_{T_i}$,
to compute this cost component.

\section{Final Problem Model}
\label{model_sec}
\textbf{minimize}\\~\\
\(
\mbox{~~~~~~~~~}\left[
\begin{array}{ll}
\mbox{(\textbf{Mandatory})} & \sum\limits_{i=1}^{T-1}\|P_{T_{i+1}} - P_{T_i}\|\\
\mbox{(Not used here)} & {\large \oint_P}
\mathbf{\mbox{~~min}\left(d_{ap1}^\alpha,\cdots,d_{apC}^\alpha\right)\cdot dp}
\end{array}
\right]
\) \\~\\
\textbf{subject to}
\begin{eqnarray*}
\mbox{~~~~~~~~~~~}\|P_{T_{i+1}} - P_{T_i}\| \geq L_{min}
\mbox{~~~~~~~~~~~~~~~~~~~~~~~~~~~~~~~~~~~~~~~~~~~~~~~~~~~~~~~~~} \\
\forall i \in \left\{1,\ldots,T\right\}, \forall j \in
\left\{1,\ldots,W\right\} \mbox{~~~~~~~}
y_{P_{T_i}} \geq m_j\cdot x_{P_{T_i}} + c_{j,ROW} \\
\forall i \in \left\{1,\ldots,T\right\}, \forall j \in
\left\{1,\ldots,W\right\} \mbox{~~~~~~}
y_{P_{T_i}} \leq m_j\cdot x_{P_{T_i}} + c_{j,CFOD} \\
\tan^{-1}\left(\frac{y_{P_{T_i}} - y_{P_{T_{i-1}}}}{x_{P_{T_i}} - x_{P_{T_{i-1}}}}\right) - 
\tan^{-1}\left(\frac{y_{P_{T_{i-1}}} - y_{P_{T_{i-2}}}}{x_{P_{T_{i-1}}} - x_{P_{T_{i-2}}}}\right)
\leq 
\theta_{Bmax} \mbox{~~~~} \\
\forall i \in \left\{1,\ldots,Z\right\}:\;\|P_{z_{2\cdot
i}}-P_{z_{(2\cdot i-1)}}\| \leq d_{zero}
\mbox{~~~~~~~~~~~~~~~~~~~~~~~~~~~~~} \\
\end{eqnarray*}
(Note: Some of the $P_{T_i}$ and $P_{z_i}$ may coincide in the final
solution)
~\\

In the above model, $P_{T_i}, i \in 1,\ldots,T$ are the coordinate tuple of each
turning point along the path in some coordinate system, e.g. world
coordinate system. Also, $P_{z_{2\cdot i}}$ and $P_{z_{2\cdot i - 1}}$ are
successively paired entry and exit points of the desired path in each
coverage hole. \textit{ROW} stands for right of way constraint, and DOF
stands for the camera depth of field requirement. The last constraint
models the on-board storage limit in a zero-coverage zone (coverage hole).

\chapter{Evolutionary Background of Shortest Path Solutions}

\section{Dijkstra's Algorithm}
Dijkstra's algorithm, along with Bellman-Ford algorithm, is one of the
earliest popular solution approach to shortest path problems over discrete
graphs. The current popular variant has a fixed single node as the
''source" node, and it finds shortest paths from such source node to
\textit{all other nodes} in the graph, thus producing a shortest-path tree.

Over a graph which has cycles of non-negative cumulative weights, it can be
shown that the shortest path problem is in complexity class ``\textbf{P}".
Hence optimization techniques such as greedy approach can be used to
generate a solution. Dijkstra's algorithm does exactly that. It starts with
source node being the initial node. We presume a notation/definition that
the distance of an arbitrary node {X} be the distance from the initial node
to X. Dijkstra's algorithm assigns some initial distance values to all
nodes, and over multiple iterations, tries to greedily improve, in a
\textit{breadth-first manner}, the distance
values until they converge. The sketch of steps is as follows.

\begin{enumerate}
\item In the initialization step, every node is set to a tentative cost.
Most common initialization is zero for initial node, and infinity for all
other nodes.
\item First, initial node is marked as \textit{current}, while all other
nodes are marked as \textit{unvisited}. A set of all the unvisited nodes is
created, which may be called the unvisited set.
\item For the current node, all of its unvisited neighbors are checked.
Fresh values for their (distance) cost is calculated based on the cost of
the partial path till the current node, plus the weight of the edge from
the current node to the neighboring node being considered. Such newly
calculated tentative cost is compared to the current assigned cost value
for each neighbor, and the smaller of the two is assigned as the new cost
value for the neighboring node.
\item When  all of the neighbors of the current node have been considered,
current node is marked/moved to the \textit{visited} set of nodes, from the
\textit{unvisited} set.
\item Assuming connected graph, which is true in our case, if in any
iteration, the destination node has been marked visited , then the
algorithm is halted.
\item Otherwise, from the \textit{unvisited} set, a node that is marked
with the smallest tentative cost, is picked up as the new ``current node".
Then for next iteration, the algorithm goes back to step (3).
\end{enumerate}

\section{A* Algorithm}
Dijkstra’s algorithm is guaranteed to find a shortest path from the
starting point to the goal, as long as none of the edges have a negative
cost. However, especially in many real-life applications, there are
navigation graphs that contains subgraphs which are \textit{forbidden}.
These forbidden \textit{regions} typically model some kind of obstacles.
Any shortest path algorithm must be able to take this into account and be
able to construct a path that circumnavigates the obstacles and yet produce
a shortest cost path.

It has been shown that in certain scenarios, an example being when
\textit{concave} obstacle is present in the direct shortest way from source to
goal, Dijkstra’s algorithm takes extra time (in terms of amount of partial
paths that could have been shortest towards the goal but are killed due to
presence of obstacle next to them) to provide the guaranteed solution.

At the other end of the spectrum, one can use \textbf{best-first} local
search within the greedy iterations, to improve the cost estimate of the
neighboring nodes. For that, such algorithm keeps some estimate (called a
\textit{heuristic} of how far from the goal any vertex is. Instead of
selecting the vertex closest to the starting point, as is done in
last step of Dijkstra's algorithm, it selects the vertex closest to the
goal. Expectedly, this approach is not guaranteed to find a shortest path.
However, it runs much quicker because it an approximate measure called the
\textit{heuristic function} to guide its way towards the goal quickly.

The trouble is this extremal approach is that the algorithm tries to move
towards the goal \textbf{even} if it’s not the right path. Since it only
considers the cost to get to the goal and ignores the cost of the path so
far, it keeps going even if the path it’s on has become really long.

To avoid weak points of both the extremal approaches, A* algorithm was
developed in 1968, as a combination of both at local level (path
augmentation)\cite{a*_ref_pap}. It quantitatively generates the possibilities that a certain
direction of path extension along a certain neighbor of the current node is
most promising among all neighbors. The algorithm focuses on single-source
single-destination, rather than single-source all-destination shortest path
of Dijkstra's. The construction of the shortest path proceeds via the most promising neighbor. At the same time, other neighbors
are kept on wait list, so that if the current extending partial path hits a
dead end, another promising neighbor with least cost can be picked up a
next most promising extension direction, and extension can proceed along
with that.

The cost function is modeled as
\[ f ( n ) = g ( n ) + h ( n ) \]
Where $g(n)$ is the actual cost from the source node to the current,
intermediate node $n$, and $h(n)$ is the estimated cost from the current
node $n$ to the goal node. $h(n)$ represents a heuristic estimate. A* finds
the best path in shorter time, on an average case, provided that the
heuristic function $h(n)$ obeys certain conditions.  At each step in the A*
path extension, the minimized $f(n)$ value is selected and acted along.

Starting with the source node, the algorithm maintains a priority queue of nodes to
be traversed. The lower the value of $f(n)$ for a given node $n$, the
higher its priority. At each step of the algorithm, the node with the
lowest $f(n)$ value is removed from the queue, the f and g values of
its neighbors are updated accordingly, and these neighbors are added to the
queue. Note that it is \textbf{not necessary} the selected/removed node is a
neighbor of the current node. The algorithm continues until the goal node has a lower f value than
any node in the queue. It is possible that the goal node is visited
multiple times, since some other partial paths may lead to shorter path to
the goal node.

\subsection{Admissibility Criterion}
It has been proved that if the actual cost from \textit{any node} $n$ to
the goal node is $\geq$ estimate $h(n)$, then the A* algorithm is
guaranteed to produce a minimum cost path. This condition is known as the
\textit{admissibility condition}.

\section{Theta* Algorithm}

Due to its simplicity and optimality guarantees under admissibility
criterion, A* has been, till recently, almost always the search method of
choice in real applications. However, in applications in which the
environment is continuous (e.g., the UAV navigation in 3D terrain in our
case), a shortest path found using A* on any discretized graph is
\textbf{not equivalent} to the actual shortest path in the continuous
environment. This is because A* constrains paths to be formed by edges of
the graph which artificially constrains path headings. The twin figures,
borrowed from \cite{theta*_pap}, showcase this gap.

\begin{figure*}[!h]
\begin{center}
\subfloat[Discrete Shortest Path using Square
Grid]{\label{in_1}\includegraphics[scale=.66]{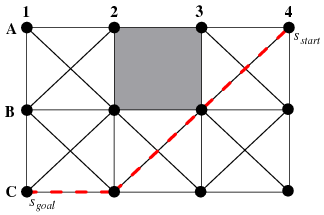}}
\qquad
\subfloat[Actual Shortest Path in Continuous
Environment]{\label{ms_1}\includegraphics[scale=.67]{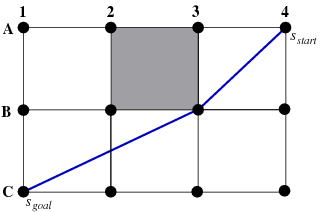}}
\caption{Offset between Discrete and Continuous Shortest Paths, courtesy
\cite{theta*_pap}}
\label{theta_fig}
\end{center}
\end{figure*}

Many solutions have been proposed for this problem, most of which deal with
post-processing of discrete path, to produce a smoother path. However,
expectedly, each post-processing technique produces unrealistic results in
certain space and/or grid configurations. A recent and quite popular
technique which smoothens the path as it is being greedily evolved using A*
algorithm, is known as \textbf{Theta*}. It is also known as the Any-Angle
Path Planning approach for smoother paths in continuous environments
\cite{theta*_pap}.

Theta* is a variant of A* that does not constrain the path being evolved to
stick to graph edges. This in turn allows one to try locate any-angle path
in the unconstrained continuous environment. It is claimed that the pseudo
code for Theta* has only four more lines than the pseudo code for A* and
has a similar runtime, but finds path that is nearly identical to the
continuous shortest path. The key difference between Theta* and A* is that
Theta* allows the parent of a vertex to be any vertex, unlike A* where the
parent must be a visible neighbor. To have such extended, transitive
parental relation, Theta* updates the g-value and parent of an
\textit{unexpanded} visible neighbor s' of vertex s by considering the
following two paths.

\begin{description}
\item[Path 1] As done by A*, Theta* considers the path from the start vertex
to s [= g(s)] and from s to s' in a straight line [= c(s,s')], resulting in
a length of g(s) + c(s,s').
\item[Path 2] To allow for any-angle paths, Theta* additional considers the
\textit{grandfatherly} path
from the start vertex to parent(s) [= g(parent(s))] and from parent(s) to
s' in a straight line [= c(parent(s),s')], resulting in a length of
g(parent(s)) + c(parent(s),s') if s' has line-of-sight to parent(s). The
idea behind considering this alternative is that in case of Euclidean
shortest path problems, this alternative is no longer than previous default
path, due to the \textit{triangle inequality}, if s' has line-of-sight to parent(s).
\end{description}

\begin{figure}
\begin{center}
\includegraphics[scale=.8]{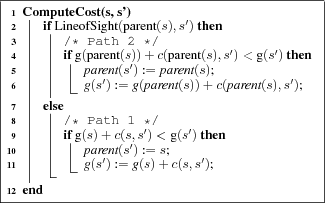}
\end{center}
\caption{Pseudo code for Theta*, courtesy \cite{theta*_pap}}
\label{ps_theta_fig}
\end{figure}

The additional search logic over A* algorithm is shown in Figure
\ref{ps_theta_fig}. We will be using further variation over Theta*
algorithm to design an algorithmic solution for our problem.

\subsection{Variants of Theta*}
Due to immense popularity of Theta*, many variations of it have been
devised till date. Prominent among them are Pi* \cite{incr_pi*_pap}, Lazy
Theta* \cite{lazy_theta_pap}, Angle Propagation Theta*, Augmented Lazy
Theta*, Abstract Augmented Lazy Theta* \cite{aal_theta*_pap} etc. However,
most of them deal with reducing the runtime of the algorithm, not exactly
changing the logic of the algorithm. At this stage, we are focused on
producing better results in terms of path cost offset. Hence we will not be
considering these variations at this stage.

\chapter{Solution for Mono-objective ESP Problem}
\label{mono_sol_chap}
To focus on the constraints and hence the feasible set, we first work out a
restricted problem involving \textit{scalar cost}. We choose the popular
\textbf{path/distance} cost as the cost to first deal with.

While dealing with this cost, all but one constraints remain practically
relevant. The storage constraint, which leads to dealing with zero-coverage
zones, is irrelevant for this cost function. However, retaining this
constraint does not make the solution illegal or invalid. Hence we retain
this constraint here, to understand its impact on the feasible set. In any
practical implementation, the designer is free to drop this constraint if
required.

\section{Configuration of Search Space}
Before working out the solution to an optimization problem, characterizing
the solution is generally done first, to understand what kind of solution/s
one can expect. One of the important characterization deals with whether
the problem is convex. Convexity of an optimization problem in turn implies
convexity of both the underlying set as well as the objective function. These
checks can carried out as described in \cite{nocedal_book}.

Coming to a 3D Euclidean space(a set of points in Euclidean
space), with obstacles as holes, it is
straightforward to see that the underlying set is not
convex. To prove this by providing counter-example, take
any two points on two sides of the obstacle, when connected via a
straight line, will pass through the obstacle. In which case, we
cannot claim that points interior to the obstacle and lying on this line are also part of the
point set, and the problem becomes non-convex. Hence there is no
straightforward characterization of tractability of the problem. In fact,
it is proven that for spaces of dimension 3 and above, the problem is
NP-hard \cite{3dhard_pap}. Hence approximation algorithms inspired by
shortest path algorithms in a convex domain, e.g. Dijkstra, have been
researched aplenty.

The Theta* algorithm, which is the basic approximation algorithm that we
use, itself gives no guarantee of optimality of the solution, with
counter-examples provided to illustrate deviation from optimal solution
\cite{theta*_pap}.However, as we show in section \ref{cmplx_sec} in detail,
Theta* has a marked advantage in terms of computational complexity over the
exact algorithms known for solving the 2D ESP problem. Put another way,
Theta* trades off some degree of optimality with a marked improvement in
computational complexity. Hence we continue/proceed working along
their approach. Their approach is also lucrative from the point of view
that it allows us to plan the path that is lesser constrained in the angle
of turns, thus allowing us to choose within the feasible set with greater
freedom for a problem which entails turn angle constraint.

\section{Discretization of Search Space}
We have already discussed in section \ref{discrete_opt_sec} that though the
route-planning problem is a continuous optimization problem, we will try
solve the discretized version of the problem for various reasons. For
discretizing the search space, which happens to be Euclidean, we impose a
grid for sampling the space. The grid can be regular based on some pattern,
or can be irregular as well. Regular grids are most popularly used in
computational geometry, so we will stick to regular grids. Within regular
grids, we stick to the pattern which has identical cells. Even further, we
assume that the cells are square or cuboid, depending on whether the
Euclidean space being considered is 2D or 3D. We call such grid a
\textit{Uniform Cube Grid}. There can be other uniform grids also, for
example, one discussed in \cite{discrete_approx_pap}.

\subsection{Coarseness of Uniform Cube Grid}
\label{grid_unit_sec}
Given the vast area, running into at least tens of kilometers if not
hundreds, that a UAV specific to our application has to cover, we need to
decide on the appropriate (constant) cell size. In a similar application
discussed in \cite{mil_route_pap}, a 200$\times$300 nautical miles area was
partitioned into cells having constant size of 8$\times$8 nautical miles.
The authors claimed that this spacing corresponds to about one minute of
flying time at the typical UAV cruising speed of Mach 0.8. Another reason
that the authors have considered is the coarseness of the terrain data made
available. The authors had used digital terrain elevation data freely
available from the National Geospatial- Intelligence Agency (2004). In that
data, elevations are accurate to within $\pm$30 meters at least 90\%
of the time, and are provided at points on a grid with \textbf{1 km}
spacing. Any grid that is considered hence has to necessarily have
dimension units, each of which along a particular Euclidean direction is
multiple of the unit size of the DEM grid unit size along the same
direction. Thus, for a one minute flight time coarseness, and typical 
cruising speed and climb and dive rates for their UAVs, they approximated
the route-planning grid with vertices that had a two-kilometer horizontal
spacing and 200-meter vertical spacing.

If any other kind of map is overlaid on the DEM model, which has to be
considered during route planning (e.g. a weather model which predicts
position of stormy areas at a particular moment), then the coarseness of
that map also has to be taken into account. A composite \textit{reference}
map must have grid size, each of whose dimension which is the
\textbf{minimum} of all the overlaid maps \cite{haz_pap}. As already
mentioned, the route-planning map is a multiple of such composite reference
map.

Another useful insight into deciding the grid dimension is provided in
\cite{dgraph_srch_pap}. All fixed-wing UAVs have a limit called
\textit{minimum turning radius}. It is imperative the route-planning grid
dimension be greater than the turning radius, so that shorter turns get
disallowed.

Yet another possibility of fixing grid dimension is to use the minimum
route leg length constraint, described in section \ref{min_rleg_sec}. The
grid dimension can simply be taken to be equal to the constant minimum
route leg length. Choice of such grid dimension implies that any straight
line path segment, when Theta* algorithm is used, is constrained to be a
multiple of the form $(1+\delta)$, where $\delta$ is a positive fractional
number. The \textit{limitation} occurs since $\delta$ is constrained to be
from a small finite set of fractional numbers, and cannot be arbitrary
(i.e. belonging to infinite set).

 Finally, one may note that it is possible to take the world coordinate
system (WCS) as the default grid. Sampling of such grid is based on DGPS
and altimeter, respectively.  A DGPS system gives readings of
latitude/longitude with few meters of accuracy. The most common altimeter in
form of barometer, which are also most robust altimeters, give reading with
accuracy in centimeters. However, hinging the dimension of route-planning
grid to such sampling accuracy will lead to very fine grid, something which
will make the corresponding discrete graph extremely large in size. This in
turn will entail very long path search time. Hence it is important the
route-planning grid dimension be decided based on other considerations such
as few illustrated earlier.

\subsection{An Aside on Random Grid}
There are certain disadvantages of using regular grids. One of the
disadvantage of limited turn angles was highlighted by authors of
\cite{theta*_pap}, and solution of that led to solutions that are more
close to the optimal solution. On similar lines, in \cite{3drisk_pap}, the
authors have claimed that if regular grid is used in the Euclidean space,
then the banking angle constraints lead to suboptimal solutions over cost
function.  Hence they suggested that the grid must be irregular, to that
there are at least few bigger cells, in which the approximation error of
the segment of path passing through them is also \textit{relatively}
smaller. One way of generating an irregular grid, as per their suggestion,
is to generate a ``space" of randomly length-varying lines, and concatenate
them appropriately from source to destination. Then they check various
constraints on such concatenations, to drop out infeasible routes. Thus the
size of the graph of feasible paths from source to destination shrinks to a
smaller feasible set. Over such graph, they propose that one must run any
shortest path algorithm with proper non-uniform Euclidean cost
edge-weights. However, the cost of generating a random grid for a large
side graph adds to the computational complexity of the offline planning
software anyway, which is the design tradeoff involved in pursuing this
approach.

\section{Implementing Various Constraints}
To search for feasible solution, we first try to locate the feasible set
within the appropriate 2D/3D Euclidean space. For certain constraints, it
shall be possible to mark out a subspace of the 2D/3D WCS space in which
route is physically present, as a union of navigable regions. More
precisely, since we discretize the WCS Euclidean space into a grid-based
graph, some of the constraints will lead to a bigger feasible set that is a
forest. For the remaining constraints, we then \textbf{greedily} search
within this forest for either the optimal paths, or just the feasible
paths, while simultaneously obeying the remaining constraints during the
greedy path formation via stepwise path extension.

\subsection{Factoring of Feasible Channel Constraints}
There are two constraints that lead to direct pruning of the WCS
route-planning space. These are the constraints related to minimum
separation (c.f. section \ref{min_sep_sec}) and maximum separation (c.f.
section \ref{max_sep_sec}).

\begin{figure}[!h]
\begin{center}
\includegraphics[scale=.3]{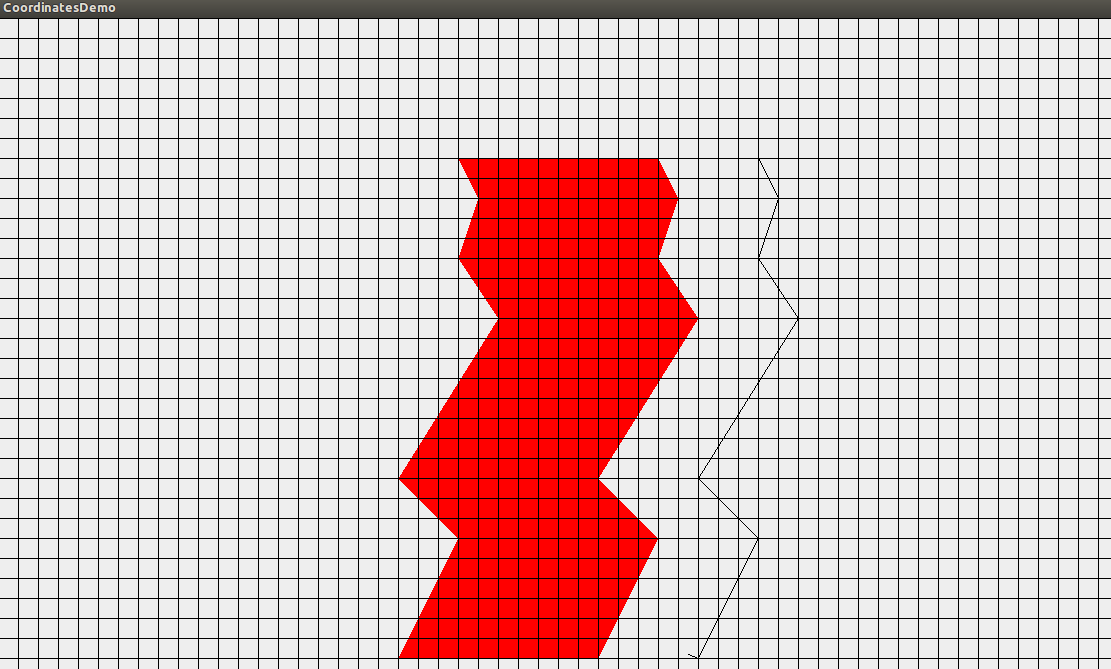}
\end{center}
\caption{A Model of Constrained search space based on separation concerns}
\label{sep_fig}
\end{figure}

\begin{figure*}[!h]
\begin{center}
\subfloat[A Grid by the side of a
Highway]{\label{rl1_1}\includegraphics[scale=.27]{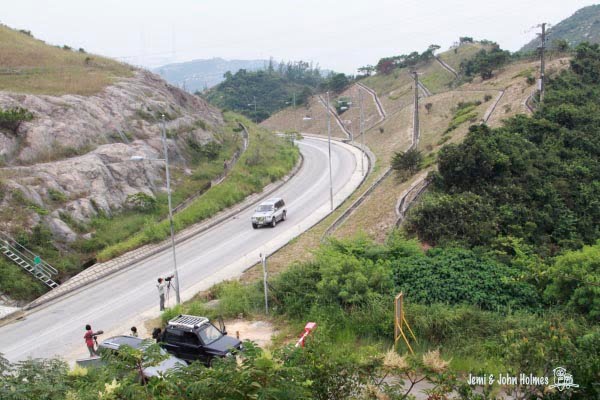}}
\quad
\subfloat[Overlaid Flight Corridor for
UAV]{\label{rl_2}\includegraphics[scale=.27]{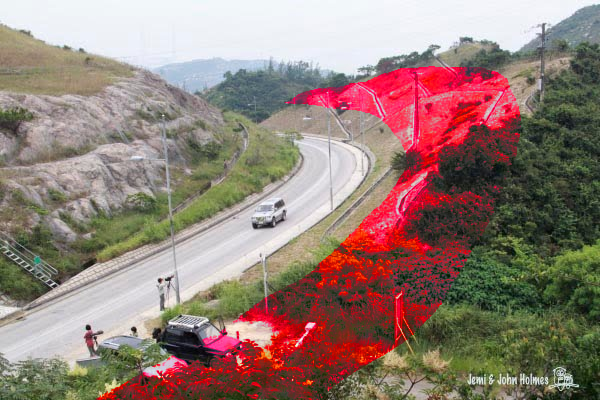}}
\caption{A Realistic Depiction of Flight Corridor. For
brevity, channel is only depicted on one side.}
\label{corr_fig}
\end{center}
\end{figure*}

It is straightforward to see that the imposition of
minimum separation constraint leads to creation of a \textbf{half-space} on
either side of the infrastructure (target of surveillance), that extends
till infinity but does not intersect the infrastructure itself. Similarly,
it is straightforward to see that the imposition of maximum separation
constraint leads to creation of another \textbf{half-space} on either side
of the infrastructure. This half space not only extends in the opposite
direction that of the half-space arising from minimum separation
constraint, but also intersects the infrastructure and does not extend till
infinity.

Since our solution has to be such that both constraints are obeyed
simultaneously, any such solution necessarily has to lie in the
intersection of these two half-space. This in turn means that in 2D case,
such intersection is a corridor each on either side of the infrastructure.
While in 3D case, it means a cylindrical annuli in which the route must be
planned. This pruned search space is depicted in Fig. ~\ref{sep_fig}.

\subsection{Factoring of Minimum Route Leg Length Constraint}
Minimum route leg length constraint can be taken care in multiple ways. One way
is to select a routing grid which has grid dimension same as minimum route
leg length, as discussed earlier. In such case, there is nothing to be
further taken care of, while searching for an optimal solution. If the grid
dimension is not \textit{greater than or equal to}  this constant
constraint, then a modification to Theta* is required, wherever a turn is
encountered.

\begin{figure}[!h]
\begin{center}
\includegraphics[scale=.8,keepaspectratio=true]{./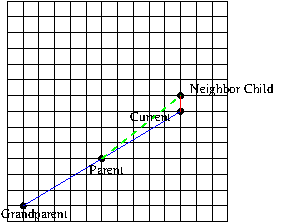}
\end{center}
\caption{Standard Node Nomenclature for Shortest Path Algorithms}
\label{nomen_fig}
\end{figure}

Turn is detected necessarily at each path augmentation step,
to satisfy the turn constraint that is discussed in next
section. To understand how turn is
detected at each step, we follow nomenclature shown in Fig.~\ref{nomen_fig}. We will
also use the following theorem as we design our algorithm.
\begin{thm}
In greedy Theta* shortest path algorithm, the
direction formed by $\langle$grandparent, parent$\rangle$ and the direction
formed by $\langle$parent,
current$\rangle$ are not same. That is to say, the nodes grandparent, parent,
current
are not collinear.
\end{thm}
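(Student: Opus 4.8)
The plan is to argue by contradiction, using the two-path update rule of Theta* (Path 1 versus Path 2) together with the triangle inequality. Suppose, contrary to the claim, that the nodes grandparent, parent and current are collinear and that the direction from grandparent to parent equals the direction from parent to current; this is exactly the ``same direction'' case, and it forces parent to lie \emph{on} the straight segment joining grandparent to current.

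First I would establish the geometric fact that \textbf{current has line-of-sight to grandparent}. Since parent is the parent of current and grandparent is the parent of parent, each of the two segments (grandparent, parent) and (parent, current) is, by construction of the Theta* path, an obstacle-free line-of-sight segment. When the two segments are collinear and point in the same direction, their union is a single straight obstacle-free segment from grandparent to current. Hence current sees grandparent directly, so Path 2 of the Theta* update (the ``grandfatherly'' option, available precisely when the neighbour has line-of-sight to the parent of the expanded vertex) is applicable at the point where the parent of current is decided.

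Next I would compare the two candidate costs at that decision. The Theta* update maintains the invariant $g(\text{parent}) = g(\text{grandparent}) + c(\text{grandparent},\text{parent})$, and the A*-style Path 1 cost through parent is $g(\text{parent}) + c(\text{parent},\text{current}) = g(\text{grandparent}) + c(\text{grandparent},\text{parent}) + c(\text{parent},\text{current})$. Collinearity turns the triangle inequality into the equality $c(\text{grandparent},\text{parent}) + c(\text{parent},\text{current}) = c(\text{grandparent},\text{current})$, so the Path 2 cost $g(\text{grandparent}) + c(\text{grandparent},\text{current})$ is \emph{no longer than} the Path 1 cost. Because Theta* adopts Path 2 exactly when it is no longer than Path 1, the algorithm would have assigned grandparent — not parent — as the parent of current, contradicting the hypothesis that parent is its parent. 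This contradiction forces the two directions to differ, which is the claim.

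The step I expect to be the main obstacle is pinning down \emph{at which expansion} the parent of current is actually fixed and ensuring the Path 2 comparison is genuinely triggered there. Basic Theta* compares a neighbour only against the immediate parent of the vertex being expanded, so to invoke Path 2 against grandparent one must argue at the expansion whose expanded vertex already has grandparent as its parent (or telescope the argument back along the path), and one must treat the tie in the ``no longer than'' comparison carefully so that the grandparent option is indeed selected rather than left ambiguous. I would also dispose of the opposite-direction collinear case at the outset, since a forward-progressing greedy search never reverses onto the segment it has just traversed.
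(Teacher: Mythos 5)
Your proposal is correct and takes essentially the same route as the paper's own proof: a contradiction argument in which collinearity fuses the two obstacle-free line-of-sight segments (grandparent, parent) and (parent, current) into a single obstacle-free segment, whence Theta*'s Path~2 (grandfatherly) update would have made grandparent, not parent, the parent of current. The paper asserts that Path~2 switch directly, without your explicit g-value/triangle-inequality cost comparison, and it also silently passes over the expansion-timing subtlety you flag (the parent of current may in fact be fixed during the expansion of a vertex other than parent itself, where line of sight to grandparent is never tested), so your version is, if anything, the more careful of the two.
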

\begin{proof}
 We first observe that since the evolving path has
already
found a path segment between parent node and current node along the line of
sight between them, there is no obstacle between them that crosses the line
of sight. Similarly, there is no obstacle between grandparent node and
parent
node that crosses their line of sight. So if these three nodes were
collinear, then
Theta* algorithm would have found a direct line of sight between
grandparent
node and current node itself, devoid of any obstacle between them. In which
case, parent of current node would have been the grandparent node. Since
that
is not the case, we have a proof by contradiction that the nodes
grandparent,
parent, current are indeed not collinear, leading to a turn at parent node.
Hence the minimum route leg length would have
already been performed between the grandparent node and parent node, by the
time the algorithm/path augmentation reaches the current node. Therefore,
all
one needs to do is that at the current node, while choosing a neighbor to
augment
the path in best-first way, minimum length check is carried out involving
parent
node, current node and neighbor/child node, only.
\end{proof}

Via this
modification, it is first
checked that if it is possible to directly join the child node and the
parent node, then the distance between them is greater than the minimum
route leg length threshold. If it is not so, then following the
Theta* logic, alternative indirect path augmentation via current node and
then child node may or may not lead to a turning i.e. two possibilities. If
the direction between $\langle$parent, current$\rangle$ and the direction
between $\langle$current, child$\rangle$ are different, then turn is
involved. However, by triangle formed by $\langle$parent, current,
child$\rangle$ nodes need not be an obtuse-angled triangle. Hence one must
not assume and explicitly check for the distance between $\langle$parent,
current$\rangle$ to be greater then the threshold of minimum route length
constraint. In the second case, the possible path segment $\langle$parent,
current, child$\rangle$ are collinear and hence the constraint must be
checked between child node and parent node. If the constraint is not satisfied in
the case that is applicable(out of two), then the indirect path
augmentation via the current node also fails. In such case, the specific
child/neighbor is dropped off consideration, and the next neighboring child
must be considered. If none of the neighbors help in obeying the route
length constraint, then the algorithm must backtrack, in same way as in
handling of the limited-turn-angle constraint, explained next.

\subsection{Factoring in Angle Constraint}
\label{angle_solve_sec}
One of the most popular shortest path algorithm remains the A* algorithm,
on a discrete grid. However, especially for route planning with turning
constraints, A* is known to be inefficient \cite{victor_thesis}. This is
because each of the neighbors, along with path can be possibly extended,
leads to turn angle which is constrained to be one of $\langle
\pm45^\circ,\;\pm90^\circ,\;\pm135^\circ\rangle$. This is a highly
restricted set to be useful for our application, as explained next.

One possible way of dealing with turn-constrained routing is to perform
graph transformation as defined in \cite{turn_transform_pap}, and doing a
path search on the transformed graph. But such algorithm is inefficient for
simple reason: in a 2D grid for example, there are typically at maximum 1
neighbor left post transform for each node, since the turn constraint is
generally about 30$^{\circ}$ \cite{turn_2010_pap}, \cite{angle_pplan_pap},
\cite{haz_pap}. On such a sparse graph, the path which is deemed optimal is
actually very suboptimal when compared to the optimal path in the
continuous version of the problem. Hence we look forward for using
any-angle path planning, a group of algorithms proposed under name of
``\textbf{Theta*}" \cite{theta*_pap}. This path search algorithm leads to
countably finite but much larger set of feasible turning angles at various
points of turn along a path.

A known problem with Theta* algorithm is that at times, it outputs a path
that has many turns. A way of minimizing the number of turns is provided in
\cite{fewer_turns_pap}, \cite{cov_pplan_pap}. However, the same paper shows
that such paths can be up to 22\% more long on an average, in 2D case. In
battery-constrained platform such as UAV, such extra cost cannot be
afforded. Hence we stick with Theta* only, for the base framework,
since it gives much better average case performance (lesser approximation
error).

Since there are many scenarios to be considered within modified Theta*, we
provide a separate, next section to explain that.

\section{Angle-Constrained Theta*}
\label{ac_theta_sec}
\subsection{Angle Consistency with Grandparent}
The first modification within Theta*, over A* algorithm, checks for a
line-of-sight condition between the current node's child/neighbor node, and
its parent node. If line of sight exists, then as proved in previous section, due
to triangle inequality, the path extended till the child node will make a
``turn" at the location of the parent node. The two directions/lines which
are involved and subtend a certain turn angle are a) line between current
node's grandparent and the current node's parent, and b) line between
current node's parent and the current node's child/neighbor being
considered. Hence the first check has to make sure that before current
node's parent and child are connected via a direct path, the above angle is
within the limit prescribed by the constraint.  If the turn angle is
0$^{\circ}$, then the constraint is trivially true.

\begin{figure}[!h]
\begin{center}
\includegraphics[scale=.7]{./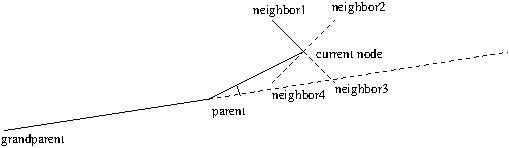}
\end{center}
\caption{Consideration for Turn Constraint in Theta*}
\label{turn_theta_fig}
\end{figure}

\subsection{Angle Consistency with Parent}
If the above angle constraint fails, even if there is a line-of-sight
between parent and child of the current node, then one must fall back to
the A* way in local sense. In such case, one must check whether the angle
subtended between the line from current node's parent to the current node,
and the line from current node to the child being considered, is within the
limit prescribed by the constraint. Note that the line between current
node's parent and current node, due to nature of Theta* algorithm, need not
be aligned at all with any grid dimension or grid diagonals, since the
parent node may be many grid units away from the current node. Hence the
possibility of success at this step is also on the higher side, than the
possibility considering pure A*. Once again, if the turn angle is
0$^{\circ}$, then the constraint is trivially true.

\subsection{Moving on to next Neighbor}
If in both of the above cases (considering or not considering triangle
inequality), the neighbor node being considered is not found
suitable to extend the path along, then we must look for alternatives. In
that case, the child node being checked should be dropped from
consideration, and another neighbor/child node must be looked for possible
and compatible turn constrained path extension.

\subsection{On Exhaustion of all Neighbors}
\label{roll_back_sec}

There are typical scenarios in square grid where plain angle checking
involving grandparent, parent, all other neighbors of current node fails,
especially around obstacles. In such case, only possible option is to
backtrack along the partial path, and try to approach the obstacle using
some other neighbor of a prior node on the backtracked path. This leads to
a possibility whereby the angle of approach towards a obstacle
known/discovered becomes lesser/acute.

Other than obstacles, the algorithm can also fail to extend a partial path
through a coverage-dead zone, which behaves like a \textit{porous
obstacle}. As discussed earlier, only few edges are retained across a
coverage-dead zone, in the navigation graph/grid. In such case, especially
if the approach angle of the partial path towards the boundary of any
coverage-dead zone is close to normal, none of the limited edges across
such zone may subtend an angle with the approaching partial path which is
lesser than the required constrained angle. Hence even if we disregard
proper obstacle modeling since it is not required by our application, we
still need to design for navigation across/around coverage-dead zone. This,
if angle constraints are not getting obeyed, boils down to backtracking and
taking a detour.

\cite{angular_ucv_pap} suggests a heuristics of putting non-uniform weights
to the cell, so that any path approaching an obstacle starts gets
\textit{repulsed} to take a detour around the obstacle.
\cite{angle_pplan_pap} suggests another way, which boils down to having an
adaptive value of cell size. Our way is different since it involves
backtracking in order to circumnavigate the obstacle via alternative route.
The direction of turn is immaterial to us, and hence we do not follow any
approach similar to \cite{beamlet_pap}. Usage of backtracking in the path
evolution is visible in algorithm proposed in \cite{sparse_a*_pap}, albeit
in a different way.

To backtrack, it is not necessary that we backtrack all the way to the
parent node of the current node. In Theta*, it is possible that the parent
node is actually a node very far away from the current node, not
necessarily at a distance of unit cell. It is imperative that the farther
we backtrack and try approach via a detour, the higher the approximation
error become. Hence via a modification to Theta*, we also keep track of
which previous node participated in the triangle inequality for the current
node. We just go back to that specific neighbor of the current node, and
try take a shorter detour when needed. A depiction of such detouring is
shown in Fig.~\ref{detour_fig}. 

\begin{figure}[!h]
\begin{center}
\includegraphics[scale=.9]{./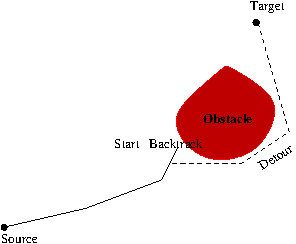}
\end{center}
\caption{Depiction of Unit-step Backtracking and detouring}
\label{detour_fig}
\end{figure}

\subsection{Factoring of Coverage-dead Zone Constraint}
\label{coverage_sec}
This constraint is intimately related to the storage constraint introduced
in section \ref{max_store_sec}. As discussed there, in order to avoid
overflow while storing, one can force the path planning algorithm to only
consider options where the segment of the path lying in a specific
coverage-dead region to have an upper-bounded length.

\begin{figure}[!h]
\begin{center}
\includegraphics[scale=.65]{./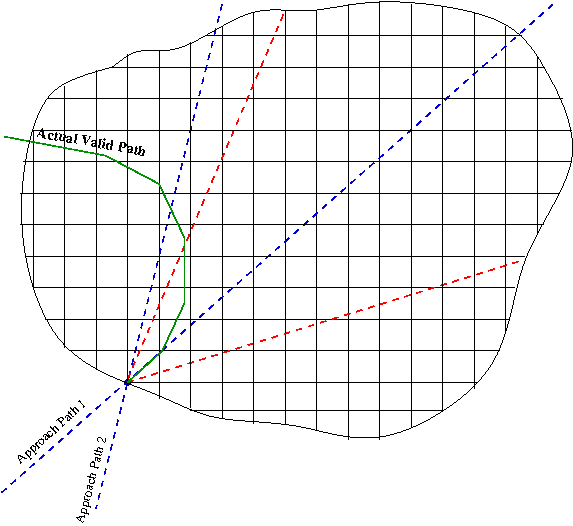}
\end{center}
\caption{Extension of Partial Path using Theta* by Exclusion of Forbidden
Sub-zones within Coverage Restricted Zones}
\label{dead_zone_fig}
\end{figure}


Any such path segment has to necessarily cross the coverage-dead zone. In
case of A*, a valid path segment crossing the dead zone would have
connected two boundary points of the (discretized) dead zone. However, in
case of Theta*, neither of the two points of intersection of a valid path
with the coverage-dead zone boundary may be a grid point. However, since
the path evolves in a greedy fashion, by consideration of one ``best"
neighboring grid point at a time, what is sure is that any partial path
will first intersect any zone's boundary at a grid point only. It may so
happen that after few extensions, the diagonal consideration for path
extension using triangle inequality within Theta* may lead to both the
entry and exit points of the extended path beyond the coverage-dead zone
crosses the zone boundary at fractional grid points. See
Fig.~\ref{dead_zone_fig} for an illustration.

Once a partial path arrives at a grid node on the discrete boundary of a
coverage-dead zone, its extension within dead zone has to be strictly
satisfying the storage constraint i.e. the upper bound on path segment
lying within the dead zone. In a naive design, such upper bounded paths are
expected to be lying in two sectors formed by a chord of such upper bound
length, and the zone boundary. Such sectors are depicted to be to the left
and to the right, respectively, of the two \textit{red-colored} chords of
same upper-bound length, in Fig.~\ref{dead_zone_fig}. A partial path can
arrive at a coverage-dead zone along a direction that when extended can
either fall within one of such sector, or it may fall in the remaining dead
zone. These two options are depicted via the two \textit{blue-colored}
paths shown in Fig.~\ref{dead_zone_fig}.

The presence of such sectors does not rule out the case when a
\textbf{valid} path first moves into the sub-zone corresponding to
breaching of upper bound, then makes consistent turns into safe
sectors/sub-zones and finally exits the dead zone in such a way that
overall, the length of extended path segment within the coverage-dead zone
is within the upper bound. Such situation is depicted by the
\textit{green-colored} path extension shown in Fig.~\ref{dead_zone_fig}.
Hence it is clear that any path extension from inside or on the boundary of
the dead zone, done in a greedy way, must not be done by imposing the upper
bound too early (e.g. via keeping the path extension to be within certain
sectors).

It is also desired that the upper-bounded constraint is not imposed once
the path is \textit{freely} extended in a greedy way within the
coverage-dead zone, and has just exited the zone. For, if such path has
exceeded the upper bound, then one will have to roll back a lot, back to
the point of entry of partial path into the dead zone.  To be able to still
extend path from \textbf{each} current node within the dead zone to one of
its neighbors in a constrained way, one can \textbf{estimate} whether any
such path extension will lead to violation of the storage constraint.

To check for this violation, at the neighboring point of the current node
under consideration, we check if the extension along it will lead to
breaching the length upper bound for the path segment lying strictly within
the coverage-dead zone. This way, we freely extend the path in the
coverage-dead zone, but do not extend it all the way till its exit from the
zone. Before we consider a neighbor, we must make a
shortlist/subset of neighbors of the current node that \textbf{satisfy all
the remaining} constraints. From this shortlist, we consider neighbors
one-by-one for extension. For the current node, upper bound cannot be
breached, else the path extension would not have reached it. However, if
for a particular neighbor, the upper bound is breached, then the path
cannot be extended along that neighbor. In such a case, we consider another
neighbor from the shortlist and try to extend along that neighbor. If we
are unable to extend along any neighbor, then we need to backtrack along
the path evolved so far.

\begin{figure}[!h]
\begin{center}
\includegraphics[scale=.65]{./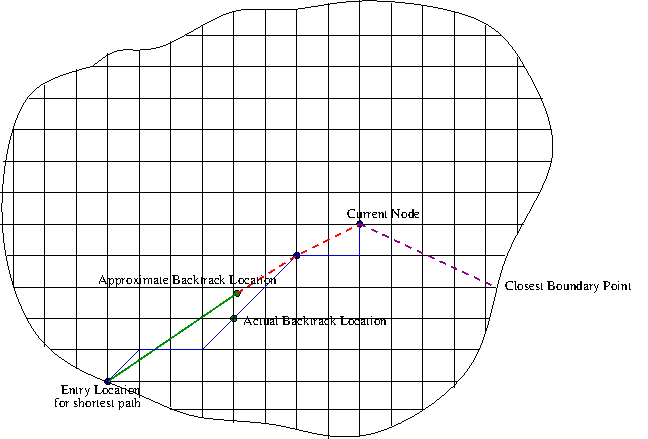}
\end{center}
\caption{Backtracking for Valid Path Extension}
\label{backtrack_fig}
\end{figure}

How much to backtrack? Must one backtrack all the way to the entry point
into the coverage-dead zone, or backtrack just one step back i.e. to the
last neighbor of the current node considered by Theta*, or the parent of
current node? The answer does not lie in these extremes. Instead, it lies
in between. From the current node, one can approximately measure how
\textit{minimally} far one is from the boundary of the coverage-dead zone.
This can be measured by taking the distance of the current node from each
of the finite boundary points, and taking the minimum. Then, along the path
evolved, one must traverse back the same amount of distance, appropriately
rounded off to nearest integer. Since such point need not lie on the grid
corners, one must locate the nearest grid corner \textbf{along} the path of
shortest path evolution. To find such a grid corner, one must consider the
two parent nodes (or the parent node and the current node), between which
the backtracked point has been marked. The sequence of neighbors considered
while evolving path between these two nodes using Theta* is a finite list
of grid points which are expected to be closer to the marked location. We
take individual distances of the marked location of all these grid points,
and pick up the grid point that is closest to marked location. We rescind
the path to such closest grid point. All the edges that were previously
taken to reach the current node using Theta*, are marked as having infinite
weight. From such rescinded grid point, we start our path extension afresh.

For the above backtracking scheme, there will be approximation error in
estimation of path rescinding. However, point to note is that we are trying
to maximally prune the search space based on whatever exploration we do
during the greedy path extension. Also, if we have backtracked enough but
not fully, then at some point, after exploring search space from the
backtracked point, another backtracking will happen. In such incremental
way, we will be able to find a shortest path segment if it exists, if not
in fastest possible manner.

\begin{figure*}[!h]
\begin{center}
\subfloat[Normal Path Evolution without Backtracking]{\label{bt_1}\includegraphics[scale=.54]{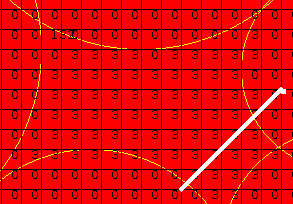}}
\qquad
\subfloat[Evolution of Shorter Path involving Backtracking, closer to
boundary]{\label{bt_2}\includegraphics[scale=.54]{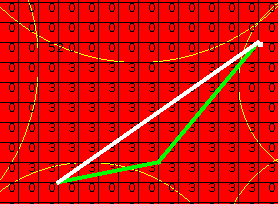}}
\caption{Illustration of Backtracking within Coverage Hole}
\label{back_fig}
\end{center}
\end{figure*}

\subsection{Degenerate Cases during Constrained Expansion}
So far, we provided details of path expansion design, while satisfying 
each of the constraints. However, there are specific path extension 
scenarios, in which the expansion cannot proceed in the straightforward 
way elaborated so far. In such degenerate scenarios, specific design 
choices for expansion are made, trading off ease of design with some 
degree of optimization error.

As first case, it can happen that after finding one better path segment 
about to grow out of a coverage hole, there exist better path segments 
passing through the same coverage hole, via the same entry point for the 
current evolving path. Towards such scenario, whenever a path is entering 
into a coverage hole, we choose to perform a exhaustive path planning 
within the entire non-coverage(hole), an induced subgraph of the path 
planning grid. Even though this increases the computational complexity,
it accounts for  all the possible paths within the same hole that can be 
obtained with the same entry point. This in turn avoids re-exploration of 
the coverage hole in the specific scenario when the head of the currently 
evolved path has grown beyond the hole, and then due to best-first nature 
of the algorithm, a node from inside the coverage hole being considered 
pops up from the heap in the next iteration. If such case happens, then 
some amount of path planning beyond the coverage hole is discarded. To 
avoid such wastage, maximal exploration of a coverage hole is done before 
exiting. This leads to creation of an ''inner`` open list, having the 
coverage hole entry point and its descendants which are inner nodes. While 
expanding using this open list, if the path head reaches a boundary node of 
that coverage hole, path expansion is not allowed to go out of the coverage 
hole until all other possible exit points, given a fixed entry point of 
the planned path into that specific coverage hole, are discovered. At the 
end, only possible exit points remain on the inner open list, and they are 
merged into the main open list, so that the ``best'' possible exit option 
is naturally picked up in the next iteration of (overall) path expansion. 
One must note that while inner exploration of a specific coverage hole is 
ongoing using an inner open list, no other sub-paths from outside the hole 
are allowed to grow (main open list is not considered across iterations). 
Such design serves a purpose that the entry point into the hole remains 
same, a requirement that is elaborated as follows. In summary, while 
exploring a coverage hole for path segments, we do not consider the nodes 
on the evolved path outside the hole, and vice-versa.

As another case, it can happen that a line-of-sight based merging between 
neighbor of the current node that is inside a coverage hole, happens to be 
with parent of the current node that is outside the coverage hole. Such 
merging will lead to added design complexity, since the entry point of the 
evolved path into the coverage hole shifts. In turn, the calculated 
cumulative length so far within coverage hole as described in previous 
section, becomes incorrect. Also, even if this calculation is corrected 
one-time, another such merging at later stage of exploration within the 
coverage hole can lead to re-calculation again. A design option considered 
here is to fix the path entry point into coverage hole fixed, by forcing 
the entry point to be a forced parent node to the current node. Thereafter 
the distance of the current node inside the hole will be checked with 
reference to this "forced" parent only.

As a reverse case, it can happen that a line-of-sight based merging 
between neighbor of the current node that is outside a coverage hole, 
happens to be with parent of the current node that is inside the coverage 
hole. Again, the problem with such merging is that the threshold checking 
done earlier while exploring the coverage hole can get voided, due to 
shift in the exit node and hence change in the calculation of cumulative 
length within hole. The proposed solution is very similar to the solution 
to the previous problem. Here, the exit point of  the hole as a boundary 
point is forced to become a parent node, thus disallowing merging of the 
above nature. Such design also helps in thwarting the possibility of a 
line-of-sight based remote merging between two nodes of an evolving path 
that will intersect a coverage hole that has already been fully explored. 
Such scenario especially arises when there are multiple holes and multiple 
obstacles, as will be most practically the case.

Such design of forcing entry and exit nodes of an evolving path to be 
forced parent node also works in the scenario when an evolving path inside 
a coverage hole just touches the hole boundary, does not go out and 
immediately reenters into the same hole. Even though this increases the 
number of turns in the final path, but it will keep algorithm fail proof 
and less complex.

As a final case, while doing the path planning the
minimum route leg length constraint needs to be checked only when the 
parent, current and neighbor are non-collinear. If they are collinear, we 
know that anyway, it came up to that point satisfying the 
minimum route leg length constraint and addition of one more step in the 
same direction will not make the constrain to fail. Thus we can avoid a lot 
of duplicated checking in the algorithm.

\section{A Note on Migration to Online Path Planning}
In previous chapter, we had argued that an off-line path planning
algorithm is most suitable for the surveillance application that we are
interested in. 
The terrain and the location of the long linear infrastructure being almost
fixed (even in disaster scenario, their components aren't expected to
geolocate themselves significantly), the only reason one would think of
doing incremental path planning is when precomputed connectivity map, or
SNR map, is unavailable. This can happen sometimes, if an area is not
learnt already for its SNR distribution \cite{ghaffar_2010_pap}. At other
times, along the flight, one gets accurate SNR value, rather than estimated
or predicted values, all the time. If the application demands that
communication cost be accurate, biobjective online path planning can be
thought of. In such case, incremental Pi* algorithm, the online version of
basic Theta* algorithm \cite{incr_pi*_pap}, can be modified to meet the
requirements of online path planning. This is in line of A* moving to D*
and its variants \cite{pplan_graph_pap} for generic shortest path problems.

One can also plan for a \textbf{hybrid} approach, whereby a gross
path is designed off-line, and for each coverage hole, based on continuous
SNR observations, path segment within each coverage hole, and till the goal
node possibly beyond, is \textbf{re-planned}, once the vehicle enters a
coverage hole. In such case, backtracking is no more possible, and mission
can fail sometimes. However, if it succeeds, then since the design is based
on real, streaming measurement data, the path length is far less
pessimistic, and hence one ends up saving on the distance cost as well as
communication cost.

\chapter{Scaling Mono-objective ESP Solution to 3D Space}
So far, we have focussed on path planning in 2D domain, for the ease of
understanding. However, in reality, the path of a UAV will be in 3D space.
In practice, the change in altitude of the linear infrastructure will not
be as vast as change in latitude/longitude of the same, especially if it
runs through flat/plateau kind of region. Even then, in theory, the problem
is a 3D Euclidean Shortest Path problem.

In continuous domain, 3D path planning problem is known to be NP-hard
\cite{esp_book}, \cite{3d_heur_pap}. Only heuristic and approximate algorithms exist so far for
solving such continuous problem. The problem becomes more tractable, once
it is discretized. This is one more reason why we turn to discretization of
the problem.

Evaluation of suitability of basic Theta* algorithm for 3D UAV path
planning, which is also the base algorithm for our solution, was done in \cite{3d_pplan_pap}. To reduce certain functional
ambiguities within Theta*, the paper suggests usage of two gain factors in
the cost component functions, $\mathbb{G}$ and $\mathbb{H}$. The paper also introduces a
specific constraint for 3D path: the upper bound on rate of climb
(altitude-gaining) by UAV, at a given (constant) speed (pitch, yaw, roll
something?). More details of this constraint can be found in
\cite{dgraph_srch_pap}.

While working with 3D discrete ESP, one needs to discretize the altitude
direction also. The consideration for discretization of this direction
cannot be the same as that of latitude/longitude, described in section
\ref{grid_unit_sec}. \cite{3d_pplan_pap} suggests using rate of climb,
which is upper-bounded due to capacity of the UAV motors. Once again, in
case of an overlaid map such as DEM being used, the resolution of vertical
axis in that map must also be factored in. 

In case of a 3D discrete grid, any node can have 26 neighbors. Since Theta*
does a best-first based expansion, there are scenarios when either multiple
neighbors of the same current node, or multiple nodes spaced far apart from
each other in the problem space, have same cost value. Which one to choose
in best-first way, so that path extension continues from there in an
average case (worst case, backtracking happens), is a problem more aptly
terms as an \textit{ambiguity} in \cite{3d_pplan_pap}. The problem of
ambiguity becomes acute due to sheer size of local or semi-local
neighborhoods of multiple currents nodes during the iteration for path
evolution. To disambiguate, their authors propose using a weighted
summation of $\mathbb{H}$ and $\mathbb{G}$, instead of plain summation.

Due to too many iterations in the innermost loop (line-of-sight) check for
3D case, basic theta* becomes computation heavy. Hence a lighter version,
more suitable for 3D path planning, called lazy theta*
\cite{lazy_theta_pap} has been worked out.

Since we are considering a unique constraint based on signal strength,
the concept of virtual terrain \cite{3d_heur_pap} can be used to reduce the
problem to 2D. This is because as instantaneous altitude increases for a
UAV, the signal strength is only going to grow weak (we have only ``base"
stations, located on ground). So any optimization algorithm will try pull
the path evolution to lowest altitude (especially the heuristic cost
function part), as long as the path falls within the \textit{3D annulus of
two concentric cylinders}. Hence altitude variation will not be much and 3D
space can be approximated into a 2D virtual terrain. 

\section{Extending Previous Solution}
Many of the strands of previous solution can be scaled directly to the 3D
case. It is easy to see that factoring in of feasible channel constraint in
3D space leads to a corridor which is a contiguous sequence of 3D
cylindrical annulus, which each element/cylindrical annulus in the sequence
representing non-turning/straight segment of the piecewise-linear
representation of the infrastructure itself. To see that, one must just
rotate the illustration in Fig.~\ref{sep_fig} around the medial line of the
winding infrastructure representation. The minimum route leg length constraint
checking scales itself trivially, in the Euclidean space via L2 norm way.

The checking for turn angle constraint between two ``directions" in
3D space implies the angle formed by them in the \textbf{2D-plane} that
they themselves form (any two lines form a plane in Euclidean and many a
non-Euclidean spaces as well). Such an angle can also be decomposed into
two angles, one being formed by the projection of two directions on X-Y
plane, and the other being formed by the projection of the same directions
on either X-Z or Y-Z plane. For most fixed-wing aircrafts, there are
different turn angle upper bounds, especially for the altitude direction,
hence such decomposition can help them being handled properly at the time
of path planning computation. None of the steps for the angle-constrained
Theta* variation that we proposed in section \ref{ac_theta_sec}, need to be
redesigned for 3D case.

The steps involved in dealing with coverage-dead zone also scale well for
3D case. However, most of the steps become computationally heavy, e.g.
finding the minimum distance to the ``surface" (instead of perimeter) of a
3D dead zone. The complexity of this computation, and many others,
increases by a factor of \textbf{n}. This itself can lead to many
backtrackings. Also because of too many ambiguities in choosing the next
neighbor as described earlier, there amount of backtrackings can get
compounded. A better approach than free-extension-then-backtrack must be
designed.

One possibility for that is guided extension within the
coverage-dead zone by biasing the cost towards a specific exit boundary
point, \textbf{during} the period when the path is evolving within the
coverage-dead zone. Such a biasing can ensure that the search for path
extension, including backtrackings, are restricted a segment/portion of the
coverage-dead zone region. Such a cost bias will be over and atop the bias
in path evolution direction that is lent by the heuristic cost component
factor, $h(n)$.

Another possibility is to reduce the computational burden by performing
only the necessary computations in the base algorithm, Theta*. Such an
approach, specifically for 3D path planning, has been evolved by authors of
Theta* itself, and is called \textbf{Lazy Theta*} algorithm. Such an
approach avoids the trap of not covering the entire path search space
within the coverage-dead zone, and yet is computationally lighter than the
previous. For more details, refer \cite{lazy_theta_pap}.

\chapter{Theoretical Properties of Mono-Objective Solution}

Any optimization algorithm, whether exact or approximate, must be checked
for correctness of the solution that it provides, ability to find solution
when it is shown to exist, and the degree of optimality of the solution
generated. In this chapter, we look at each of these properties.

\section{Correctness}
The correctness of a solution to an optimization problem pertains to the
solution obeying all the constraints. For the proof that our optimization
algorithm is correct, one can easily extend Theorem 1 of \cite{theta*_pap},
and the corresponding Lemma 2, to show that it indeed satisfies all the
constraints. We hereby provide the modified statements of the Theorem and
the Lemma 2. Their proofs are straightforwards extension of that provided in
\cite{theta*_pap}.

\begin{lem}
\label{lem1}
At any point during the execution of the modified Basic Theta* for our
problem, following the parents from any vertex in the open or closed lists
to the start vertex retrieves a path that satisfies all the problem
constraints enlisted in section~\ref{model_sec}, from the start vertex to
this vertex in reverse.
\end{lem}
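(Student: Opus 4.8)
The plan is to prove the invariant by induction on the number of parent-pointer assignments (vertex-update operations) performed during the run of the modified Basic Theta*. The base case is the initialization: the only vertex carrying a parent pointer is the start vertex, whose parent is itself, so the retrieved path is the degenerate single-vertex path, which satisfies all five constraints of section~\ref{model_sec} vacuously. For the inductive hypothesis I assume that, immediately before the current parent assignment, the parent chain from every vertex in the open or closed list back to the start vertex already yields a path obeying all constraints; in particular, by the non-collinearity theorem established earlier, every turn and every leg already present in that chain was validated in a previous iteration.

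For the inductive step I observe that a parent pointer is altered only when the algorithm commits one of the two Theta* augmentation options for a neighbour $s'$ of the current vertex $s$, and that in our modified version this commitment occurs only \emph{after} the applicable constraint checks have passed. In the Path~2 case the assignment sets $\text{parent}(s') = \text{parent}(s)$, appending the straight segment from $\text{parent}(s)$ to $s'$; here the turn occurring at $\text{parent}(s)$ is exactly the angle screened by the ``Angle Consistency with Grandparent'' test of section~\ref{ac_theta_sec}, the condition $\|s' - \text{parent}(s)\| \ge L_{min}$ is the minimum route leg length test, and the endpoints' corridor membership enforces the separation constraints. In the Path~1 case the assignment sets $\text{parent}(s') = s$, appending the segment from $s$ to $s'$, whose turn at $s$ is screened by the ``Angle Consistency with Parent'' test and whose length and corridor membership are likewise checked. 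Since the inductive hypothesis guarantees that the prefix path up to the new parent already satisfies all constraints, composing that prefix with the freshly validated final segment yields a full path that still satisfies every constraint, closing the induction.

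Two points require extra care. The separation constraints (sections~\ref{min_sep_sec} and~\ref{max_sep_sec}) are geometric rather than purely nodal, but because both endpoints of each appended segment lie in the flying corridor bounded by the minimum and maximum separation surfaces, the consistency argument accompanying the distance cost in section~\ref{model_sec} shows that every interior point of a line-of-sight segment also obeys both constraints; hence it suffices to test them at the turning points, exactly as the algorithm does. The genuinely hard part is the storage/coverage-dead-zone constraint (sections~\ref{max_store_sec} and~\ref{coverage_sec}), which is \emph{non-local}: its validity depends on the cumulative path length accrued strictly inside each coverage hole, and a Path~2 line-of-sight merge can silently relocate the entry or exit node of a hole and thereby invalidate the cumulative-length bookkeeping. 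The argument must therefore invoke the degenerate-case design, in which the hole's entry and exit boundary nodes are \textbf{forced} to remain parent nodes so that no merge can move them; under this forcing the in-hole cumulative length stays correctly tracked, the per-neighbour estimate of section~\ref{coverage_sec} faithfully rejects any extension that would breach $d_{zero}$, and the inductive step preserves the storage constraint as well. I expect this bookkeeping-preservation step to be the main obstacle, since it is precisely where the otherwise local induction must interact with the global structure of the coverage holes.
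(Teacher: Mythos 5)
Your proposal is correct and takes essentially the same approach as the paper: an induction over the algorithm's execution in which the inductive step rests on the fact that the modified Theta* commits a parent assignment (Path 1 or Path 2) only after all constraint checks have passed, with backtracking covering the case where no neighbor qualifies. The paper's own proof is only a brief sketch of this induction, so your fuller treatment --- in particular isolating the non-local storage constraint and resolving it via the forced entry/exit parent nodes of each coverage hole --- simply fills in details the paper leaves implicit.
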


\begin{thm}
When the modified Theta* algorithm terminates, the path extraction
retrieves a path satisfying all the problem constraints, from the start
vertex to the goal vertex if such a path exists.
\end{thm}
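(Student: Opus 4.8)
The plan is to follow the two-part template of Theorem~2 in \cite{theta*_pap}, splitting the claim into a \emph{soundness} part (any path the algorithm returns on termination is constraint-satisfying) and a \emph{completeness} part (the algorithm does return such a path whenever one exists). The soundness part is essentially immediate from Lemma~\ref{lem1}: the modified Theta* terminates only when the goal vertex has been placed on the closed list, or reached with a finite $g$-value, at which point Lemma~\ref{lem1} applied to the goal vertex guarantees that tracing parent pointers back to the start yields a path satisfying every constraint of section~\ref{model_sec}, read in reverse. So no separate verification of the returned path is needed beyond invoking the invariant already established.

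First I would establish termination. Since the routing grid is a finite uniform cube grid, the vertex set and edge set are finite. Each ordinary best-first expansion processes a vertex and updates a finite neighbourhood, and the only mechanism that could reintroduce work — the backtracking of sections~\ref{roll_back_sec} and~\ref{coverage_sec} — permanently marks the rescinded edges with infinite weight. Hence no edge can be traversed by a surviving partial path more than once, which bounds the total number of expansions and forces the open list to empty in finite time if the goal is never reached. This gives termination and simultaneously rules out infinite cycling through the same detour.

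The completeness part is the substance of the proof and the main obstacle. I would argue by contradiction: suppose a constraint-satisfying path $\Pi^{\ast}$ from start to goal exists, yet the algorithm terminates with the goal unreached and the open list empty. The difficulty is that, unlike vanilla A*/Theta*, the modified algorithm prunes expansions using the turn-angle, minimum-leg-length, and storage constraints, and it invalidates edges during backtracking; I must show these prunings never destroy \emph{all} feasible paths. The key lemma I would need is that every edge invalidated by backtracking is invalidated only relative to a partial path whose approach geometry cannot be completed feasibly, and that the backtracking rule — returning to the specific ancestor that participated in the failed triangle-inequality/line-of-sight step and retrying the remaining neighbours with a more acute approach angle — exhausts precisely the set of approach directions, so that if $\Pi^{\ast}$ uses a feasible approach into an obstacle or coverage-dead zone, that approach is eventually generated. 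Because the angle constraint is \emph{non-local} (it couples grandparent, parent, and current vertices), the care here is to prove that invalidation in one grandparent/parent context does not forbid the geometrically distinct edge usage demanded by $\Pi^{\ast}$; this is the step I expect to require the most attention, and it is where the greedy ordering together with the exhaustive inner-open-list exploration of each coverage hole (section~\ref{coverage_sec}) must be shown to cover all entry/exit combinations that $\Pi^{\ast}$ could realise.

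Finally, having shown that the portion of the search space reachable under the constraints is fully explored before the open list can empty, I would conclude that the goal cannot remain unreached while $\Pi^{\ast}$ exists, contradicting the assumption. Combined with the soundness step, this establishes that on termination the extracted path both exists and satisfies all constraints whenever any feasible path does, completing the proof.
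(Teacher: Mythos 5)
Your decomposition is in substance the same as the paper's: the paper also proves this theorem by combining the invariant of Lemma~\ref{lem1} (tracing parent pointers from the goal yields a constraint-satisfying path), a termination claim, and an assertion that the search is exhaustive enough to reach the goal whenever a feasible path exists. Two differences deserve comment. First, your termination argument is actually more careful than the paper's. The paper says only that ``the proof for termination of algorithm remains the same as in \cite{theta*_pap}''; but that citation cannot cover the backtracking and edge-invalidation machinery, which is new to this paper. Your observation that rescinded edges are permanently assigned infinite weight, so that total work is bounded by the finite edge set of the grid, is the argument the paper should have supplied. Second, the step you flag as ``the substance of the proof and the main obstacle'' --- showing that constraint-based pruning and the infinite-weight blocking performed during backtracking never destroy \emph{every} feasible path, given that infeasibility is always detected relative to a particular approach geometry and that the turn-angle constraint nonlocally couples grandparent, parent and current vertices --- is exactly the step the paper never discharges either. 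Its proof of this theorem disposes of it with the phrase ``due to comprehensive searching in search space, the algorithm is able to find out one of the optimal paths,'' and the companion Completeness section defers the analogous claim to an inductive argument via Lemma~\ref{lem2} whose ``details are omitted for sake of brevity.''

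So your plan is faithful to the paper's route and is more candid about where the real mathematical content lies; but be aware that, as written, your proposal and the paper share the same unfilled gap. A complete proof would have to establish the key lemma you name: that an edge invalidated in one grandparent/parent context is never the edge that a feasible path $\Pi^{\ast}$ must use in a geometrically distinct context, and that the inner exploration of a coverage hole enumerates all entry/exit combinations $\Pi^{\ast}$ could realise. Until that is proved (or the blocking rule is made context-dependent rather than global), the ``if such a path exists'' half of the theorem rests on assertion in both your argument and the paper's.
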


The proof for termination of algorithm remains the same as in
\cite{theta*_pap}.  The proof for the Lemma can again be made out using
induction hypothesis.  During each iteration when the current node is
extended, and its neighbors expanded, the algorithm ensures that each of
the constraint is explicitly satisfied, for at least one neighbor. If in a
particular iteration, none of the neighbors can satisfy all the constraints
simultaneously, then path extension cannot happen and therefore
step-by-step backtracking happens. Eventually, if at least one such path
exists, then upon termination of the algorithm, due to comprehensive
searching in search space, the algorithm is able to find out one of the
optimal paths.

\section{Completeness}
The completeness of a solution to an optimization problem pertains to the
algorithm's ability to find the solution (a constraint-obeying path in our
case), if one exists. Again, for the proof that our optimization algorithm
is complete, one can easily extend Theorem 1 of \cite{theta*_pap}, and the
corresponding Lemma 1. We hereby provide the modified statements of the
Theorem and the Lemma 2. Yet again, their proofs are straightforwards
extension of that provided in \cite{theta*_pap}.

\begin{lem}
\label{lem2}
If there exists a (piecewise-continuous) path between any two vertices
satisfying all the constraints, then there also exists a grid path
(sequence of nodes considered from open list, each of whose neighbors were
expanded before the node was put on closed list) between
the same two vertices, which was followed/traversed by our modified Theta*
algorithm, across iterations, to arrive at the compliant path.
\end{lem}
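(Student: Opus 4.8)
The plan is to prove this completeness lemma by contraposition combined with structural induction along the evolving path, mirroring the structure of Lemma 1 in \cite{theta*_pap} but adapted to our enlarged constraint set. The claim is essentially that our modified Theta* never \emph{misses} a feasible connection: if some piecewise-continuous constraint-satisfying path exists between two vertices, the algorithm's greedy-with-backtracking exploration will eventually traverse a grid path joining them. First I would fix the hypothesised feasible continuous path $\pi$ between the two vertices and discretise it by marking the grid nodes nearest to its turning points, obtaining a candidate grid node sequence $v_0, v_1, \ldots, v_k$. The goal is then to argue that this sequence (or a constraint-equivalent one) is reachable through the open-list/closed-list dynamics of the algorithm.

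The core of the argument would be an induction on the index $i$ along this candidate sequence. The induction hypothesis is that after finitely many iterations, $v_i$ has been placed on the open list with a parent chain that itself satisfies all five constraints (this is exactly where Lemma~\ref{lem1} is invoked, to guarantee the retrieved partial path is compliant). For the inductive step I would show that $v_{i+1}$, being the next node along a \emph{feasible} continuous path, must satisfy the minimum-route-leg, turn-angle, separation (ROW/CFOD), and storage constraints relative to $v_i$ and its parent --- precisely because $\pi$ itself satisfies them in the continuous domain and the discretisation error is controlled by the grid coarseness chosen in section~\ref{grid_unit_sec}. Hence when $v_i$ is eventually popped as the current node, $v_{i+1}$ survives the shortlisting of constrained neighbours and is inserted into the open list. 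Because the graph is finite and every node is expanded at most finitely often, $v_i$ is guaranteed to be popped, closing the induction.

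The subtle part, and the one I expect to be the main obstacle, is the interaction between backtracking (section~\ref{roll_back_sec}) and the coverage-dead-zone exploration using the \emph{inner} open list (section~\ref{coverage_sec}). Standard Theta* completeness relies on the open list monotonically accumulating reachable nodes, but our algorithm marks edges with infinite weight upon rescinding and temporarily freezes the main open list while a hole is explored. I would therefore need to argue that these modifications never permanently eliminate a node lying on a feasible path: edges are only killed along a \emph{specific} rescinded detour, and the forced-parent device for hole entry/exit points merely reparametrises the path without deleting reachable exit nodes from the inner open list before they are merged back. The cleanest way to discharge this is to show that every infinite-weight marking is accompanied by the insertion of at least one alternative constrained neighbour, so the reachable frontier never collapses.

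I would close by noting that, since the candidate grid path was derived from an arbitrary feasible continuous path, and since finiteness of the grid bounds the total number of backtrackings and hole re-explorations, the traversal terminates with the required grid path having been followed across iterations --- establishing exactly the statement of Lemma~\ref{lem2}. The termination bound itself can be imported wholesale from \cite{theta*_pap}, as the excerpt already asserts that the termination proof is unchanged.
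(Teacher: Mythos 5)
There is a genuine gap, and it sits exactly at the step you pass over most quickly. Your inductive step rests on the claim that the sequence $v_0, v_1, \ldots, v_k$ obtained by snapping the turning points of the feasible continuous path $\pi$ to their nearest grid nodes is itself feasible, ``because $\pi$ satisfies the constraints and the discretisation error is controlled.'' That implication does not hold for the constraint set of this problem. Snapping perturbs directions, so a turn of $\pi$ that is at or near the bound $\theta_{Bmax}$ can exceed it after discretisation; two nearby turning points of $\pi$ can snap to grid nodes whose separation violates $L_{min}$; and, most clearly, grid paths are systematically \emph{longer} than the continuous segments they approximate (the paper itself cites up to $\approx 8\%$ in 2D), so a segment of $\pi$ crossing a coverage hole with length close to $d_{zero}$ discretises to a segment that breaches the storage bound. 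So the very property your induction needs --- that $v_{i+1}$ ``survives the shortlisting of constrained neighbours'' when $v_i$ is popped --- can fail even though $\pi$ is feasible, and no appeal to grid coarseness repairs it without additional hypotheses (e.g., strict slack in every constraint along $\pi$, which the lemma does not assume). A secondary, acknowledged-but-undischarged problem is your reliance on $v_i$ eventually being popped \emph{with the required parent chain}: with best-first expansion, edge-killing on rescind, and the frozen main open list during hole exploration, that is precisely what has to be proven, not assumed.

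It is also worth noting that the paper's own proof runs in the opposite direction and thereby sidesteps the discretisation issue entirely. Rather than discretising the hypothesised continuous path and chasing the algorithm along it, the paper starts from the compliant any-angle path that the algorithm constructs and argues, by a local induction on the size of the vertex set, that every line-of-sight shortcut between a parent and a neighbour can be \emph{represented} as a sequence of edge mergers; since each graph edge joins two grid nodes, the whole any-angle path is represented by a sequence of node traversals, i.e.\ a grid path considered from the open list across iterations. That argument is admittedly terse (it leans on the termination and Lemma 1 machinery of the original Theta* paper for the existence of the compliant path in the first place), but its burden is only to decompose shortcuts into grid edges --- a purely combinatorial fact --- whereas your route must additionally prove a feasibility-preservation statement about discretisation that is false in general. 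If you want to salvage your approach, you would need to replace ``nearest grid node'' by a construction that provably respects all five constraints, which is essentially a new lemma of comparable difficulty to the one being proved.
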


\begin{thm}
The modified Theta* algorithm terminates. If a path satisfying the
constraints is known to exist, the algorithm reports one such path upon
termination. Else, on termination, the algorithm reports that no such path
exists.
\end{thm}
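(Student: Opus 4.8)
The plan is to prove the three assertions of the statement separately: termination, the return of a constraint-satisfying path whenever one exists, and the correct report of non-existence otherwise. The backbone is the pair of lemmas already in hand. Lemma~\ref{lem1} guarantees that \emph{any} parent-chain extracted from a vertex on the open or closed list is a constraint-satisfying path, so I never have to re-verify feasibility of the returned path. Lemma~\ref{lem2} guarantees that a feasible grid path is actually \emph{traversed} by the modified search whenever a feasible (piecewise-continuous) path exists. Given these, the theorem reduces to reconciling the two lemmas with the extra machinery layered on top of Basic Theta*---the step-by-step backtracking of Section~\ref{roll_back_sec} and the edge-rescinding inside coverage-dead zones of Section~\ref{coverage_sec}.

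First I would dispatch termination. The discretized search space is a finite graph: finitely many grid vertices and hence finitely many edges, each vertex having at most $26$ neighbors in the 3D case. As in the original Theta* termination argument, in the \emph{absence} of backtracking each vertex is closed at most once and each edge relaxed only during an expansion, so the expansion count is bounded. Two backtracking mechanisms could threaten this bound. The coverage-dead-zone rescinding of Section~\ref{coverage_sec} permanently stamps every edge of the abandoned segment with infinite weight, and an infinite-weight edge is never relaxed again; since the edges are finite in number, only finitely many such rescinding events can occur. The angle-constraint backtracking of Section~\ref{roll_back_sec} retreats to an earlier node and attempts a different neighbor, but each node offers only finitely many neighbors, so at any node only finitely many alternative detours can be tried before the search is forced to retreat further---a finite, depth-bounded process on a finite graph. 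Composing finitely many finite phases yields termination of the whole procedure.

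Next, for the existence case, suppose a constraint-satisfying path from start to goal exists. By Lemma~\ref{lem2} there is a grid path between the two vertices that the modified search follows across iterations; equivalently, the invariant ``the open list contains a vertex lying on some feasible grid path to the goal'' is preserved by every iteration, including the backtracking ones---\emph{provided} rescinding never deletes an edge needed by an as-yet-undiscovered feasible path, a safety property I return to below. Granting that, the open list cannot empty before the goal is closed, and when the goal is finally expanded the preceding correctness Theorem together with Lemma~\ref{lem1} certifies that the extracted parent-chain satisfies all constraints, so the algorithm reports it.

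The residual---and hardest---case is non-existence, which is also where the subtle interaction between backtracking and Lemma~\ref{lem2} lives. If no feasible path exists, I would argue that the search is \emph{exhaustive}: every feasible partial extension is eventually attempted, each dead end triggers a backtracking that monotonically removes usable edges, and since both the vertex and the edge sets are finite, the open list must eventually empty without the goal being closed, whereupon the algorithm terminates and reports non-existence. The delicate point---and the one I expect to be the main obstacle---is the \emph{safety} of edge-rescinding: I must show that whenever backtracking stamps a rescinded segment with infinite weight, none of those edges lies on a still-unexplored feasible completion, so that pruning them cannot falsify the open-list invariant used in the existence case. This is exactly the side-condition that the proof of Lemma~\ref{lem2} must secure, and it has to be argued from the upper-bounded coverage-hole length of Section~\ref{coverage_sec} together with the acute-approach-angle property of Section~\ref{roll_back_sec}; everything else then reduces to a routine finite-graph exhaustion argument.
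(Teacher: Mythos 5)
Your proposal follows essentially the same route as the paper: it certifies the returned path via Lemma~\ref{lem1}, handles the existence case via Lemma~\ref{lem2} through an open-list invariant that is precisely the paper's contrapositive statement (``if a feasible path exists, the algorithm cannot terminate with an empty open list''), and settles the remainder by a finite-graph termination argument with a case split on how the algorithm halts. The only difference is one of rigor, in your favor: where the paper defers termination to the original Theta* reference and omits the inductive details ``for sake of brevity,'' you explicitly account for the backtracking of Section~\ref{roll_back_sec} and the edge-rescinding of Section~\ref{coverage_sec} in the termination bound, and you correctly flag the rescinding-safety condition (pruned edges must never lie on a still-undiscovered feasible completion) as the step any complete proof must actually discharge---a condition the paper never addresses.
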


The proof for termination of algorithm remains the same as in
\cite{theta*_pap}. The proof of lemma is also straightforward, since in
each iteration, we consider a current node and its neighbors. Even if we
make a shortcut between the parent of the current node and a neighbor, it
can be shown by local induction on the size of the vertex set, that this
shortcut can be \textit{represented} by a sequence of edge mergers. A graph
edge necessarily runs between two nodes of a graph. Hence the sequence of
edge mergers from one node to another node, leading to formation of an
any-angle path, gets \textit{represented} by a sequence of node traversals.

In the final step to prove the modified theorem, if the algorithm has
terminated when its goal vertex has been expanded, then we can use the
earlier Lemma~\ref{lem1} to show that the algorithm can output one optimal
path. Else, if our algorithm terminates because its open list is empty,
then we need to prove that there does not exist any path from the start
vertex to the goal vertex, that satisfies all the problem constraints. To
prove this statement, we disprove
the following \textbf{contrapositive statement}:\\
If there exists a path satisfying all the problem constraints, from the
start vertex to the goal vertex, then our algorithm cannot terminate on the
specific condition of its open list of vertices being empty.

The proof of above statement follows the same lines of inductive reasoning,
as in \cite{theta*_pap}. Further, it uses Lemma~\ref{lem2} in the proof.
Details of the proof are omitted here for sake of brevity.

\section{Optimality}
\label{opt_sec}
The algorithm we have designed is based on Theta* algorithm. As elaborated
in \cite{theta*_pap}, Theta* algorithm itself is not an optimal algorithm.
It is proved in \cite{lazy_theta_pap} that any optimal \textbf{discrete
shortest path} algorithm, i.e. formed by the edges of 8-neighbor 2D grids
can be up to $\approx$ 8\% longer than the shortest paths in the continuous
environment, in the \textit{worst case}. Similarly, it is shown that the
shortest paths formed by the edges of 26-neighbor 3D grids can be $\approx$
13\% longer than the shortest paths in the continuous environment. The
corresponding proofs are non-constructive, thus only giving the bounds but
not providing the actual method of constructing such discrete ESPs from
scratch. In 2D case, since one can exactly find a continuous ESP using wave
propagation algorithm, one can indirectly construct such discrete ESPs as
well. For both 2D and 3D scenarios, it is experimentally been shown
\cite{anyangle_comp_pap} that in the \textbf{average case}, Theta*
typically finds shorter paths than the worst case.

\section{Computational Complexity}
\label{cmplx_sec}
It is known that Dijkstra's algorithm has a computational complexity of
O(E$\cdot\log$V). For a rectangular mesh, if the the number of nodes,
|V| = $n$, then the complexity becomes $\equiv$ O($n^2\log n$). However,
Dijstra's algorithm computes single source all-target shortest path, i.e.
shortest paths to all $n$ nodes. Hence, on an average
\textit{approximately}, the computation it performs to find single-source
single-target shortest path is $\approx$ O($n\log n$).

On the other hand, the complexity of A$^\ast$ algorithm is known to be
dependent on the heuristic function. In the worst case, the effective
branching factor can be too high, leading to exponential complexity
overall. However, it is also known that the direct distance heuristic, 
using Euclidean/L2 norm, is an \textbf{admissible heuristic}. Further, it
is known that other than being admissible, the direct distance heuristic
leads to expansion of only the nodes that are on the best path, and none
else \cite{amit_note}. In such case, the computation complexity becomes
equal to the lower bound, i.e. O(d), where $d$ is the length/depth of
shortest path. This depth can never be $>$ $n$, and hence the complexity
can be approximated as O(n) in the case of direct distance heuristic.

It is further known that if direct distance heuristic is used, then
A$^\ast$ algorithm provably finds out the optimal path. With this in mind,
we had used direct distance measure as the heuristic when we modified
Theta* algorithm. As is already known, Theta* algorithm is essentially A*
algorithm with one change for any-angle property. It was shown
experimentally in \cite{theta*_pap} that convergence of Theta* is quite
comparable to A*, if not provably. Thus we expect the computational
complexity of our algorithm to be $\mathbf{\approx}$ \textbf{O(n)}.

\subsection{Design Tradeoff}
To highlight the tradeoff involved, let us examine the set of \textit{exact
algorithms} known for path planning in presence of (polygonal) obstacles.
\cite{kapoor_pap}, \cite{hershberger_pap} are some of the seminal papers,
which provide exact, not approximate solutions to the problem. The runtime
of these and many others have been summarized via a table in
\cite{anil_pap}. It can be seen that while the computational complexity of
all these algorithms does not cross O($n\cdot\log n$), it does not fall to
O(n) either i.e. is superlinear in $n$. Similarly, for other non-exact
approaches, especially easy to program meta-heuristic approaches such as
particle swarm and ant colony optimization, relevant texts
\cite{ant_cmplx_pap} suggest that their computational complexity is also
not linear, but O($n\cdot\log n$).

Against this backdrop, the tradeoff
that our algorithm provides becomes clear. We sacrifice a bit on
optimality, as discussed in previous section, while we gain on speed. Since
we deal with huge graphs (corresponding to area of length tens of miles),
the gain in speed is thus significant. The loss in terms of accuracy
however is manageable, rather very less, in an average case. Especially
when the solution is scaled to more realistic 3D space, where problem is
provably NP-hard and only approximate solutions exist, any algorithm with
runtime close to O(n) is expected to have a lot of utility value. This
gives another compelling reason why we chose the specific design option
that led to we choosing to design our approximate, heuristic algorithm. 

\chapter{Performance Benchmarking}
We first experimentally show the
efficiency of our algorithm over contemporary ones, for the additional
approximation error arising out of circumnavigation of obstacles. With such assurance on
practical efficiency of path length, we have more
importantly focussed on another performance: the runtime, or
\textbf{convergence performance}. This is important since path planning
algorithms over \textbf{vast} area, and hence over a mega-sized grid, one must be
able to do fast planning, especially in the cases of disaster management
where an immediate surveillance mission is most preferred. Even
when
the search space is low-dimensional as in our case, the sheer
volume of space entails fast planning efforts.

To the best of our knowledge, no research work exists that considers
coverage holes during path planning as well. Hence it is not possible to 
do
a direct comparison with performance of existing algorithms. We however,
still performed three comparisons.
\section{Performance Degradation due to Incorporation of Extra Constraints}
As mentioned earlier, we modified Theta* algorithm so that greedy path
evolution proceeds while satisfying four extra constraints. Due to extra
computation, it is imperative that the modified algorithm runs slower than
Theta*. In one test, we tried to check, using multiple test cases, how much
slower the algorithm becomes, assuming same source and goal nodes, and same
placement and topology of obstacles (Theta* does not give special treatment
to coverage holes).

It was found that on an average across multiple test cases, the slowdown
was about 2x-3x. On the positive side, despite such slowdown, our algorithm
does provide a more real solution than Theta*, in a more real scenario for
UAV path planning. To recall that Theta* was designed for 2D gaming
scenario, when much of these constraints are not applicable.

\section{Cost Comparison for Single Constraint}
Again, to the best of our knowledge, no research work
exists that combines greedy backtracking with greedy path augmentation (Theta*),
for just the turn-angle constraint only.  Hence a direct comparison with
runtime performance of existing algorithms is at most crude. Still, a short semi-analytical comparison with repulsion-based
angle-constrained path planning \cite{angular_ucv_pap} is discussed below.
\subsection{Repulsion versus Backtracking}
\label{rep_back_sec}
As mentioned before, \cite{angular_ucv_pap} suggests a heuristics of
putting non-uniform weights to the cell, so that any path approaching an
obstacle starts gets \textit{repulsed} to take a detour around the
obstacle. \cite{angle_pplan_pap} implements this suggestion by
providing a formal way to specify this \textit{repulsion ``field"}.
One major disadvantage of this algorithm is that the set of non-uniform weights
around each obstacle is dependent on the topology of the region that the
obstacle represents. For example, for a concave obstacle, due to turn angle
constraint, one will need to model repulsion from a farther distance than a
convex obstacle of same size. Similarly, the size of obstacle is another
parameter that will impact the design of obstacle-specific repulsion
``field" around it. Our algorithm requires no such heavy preprocessing.
Further, our algorithm does efficient,
locally minimal backtracking by design.

Unlike our algorithm,
\cite{angular_ucv_pap} only proposes putting a repulsion ``field" around
each obstacle, and does not guarantee that it is of minimal size.
For a quick numerical comparison, we downloaded and used the same
dataset as used in \cite{angle_pplan_pap}, the \textit{OpenStreetMaps}
(OSM) dataset. We chose to only compare the \textit{relative path length} and
\textit{success rate} as parameters, against the LIAN-5 algorithm therein
(the fastest among the LIAN family).
The relative path length is measured as a fraction against the direct
line-of-sight distance between the source and the goal node. We chose
30 rectangular fragments in which length is few orders more than the width
(size 1950m x 315m). We discretized the grid with a square
cell of size 3m x 3m, as the unit cell. Like \cite{angle_pplan_pap}, we
also marked cells corresponding  to  the  areas  occupied  by  buildings as
un-traversable. We randomly chose source and goal nodes in each fragment
such that the LOS distance is at least 1500m. The angle limit,
$\theta_{Bmax}$, was set to 20$^\circ$. 
As can be seen from table ~\ref{liam_comp_tab}, the discretization error of 6\% is within the limit of 8\%,
established earlier. At the very least, this comparison is able to
point towards \textbf{more closeness} of our approach, to the \textbf{optimal} solution, than a very
recent approach. Also, we are able to find a path in almost all cases,
since our algorithm can backtrack, and hence is able to span most of the
solution space.

\begin{table}[!h]
\begin{center}
\caption{Cost Comparison of LIAN-5 and our Algorithm}
\label{liam_comp_tab}
\setlength{\tabcolsep}{0.5em} 
{\renewcommand{\arraystretch}{1.2}
\begin{tabular}{|c|c|c|}
\hline
\multirow{2}{*}{~}  & \multicolumn{2}{c|}{$\theta_{Bmax}$ = 20$^\circ$} \\
\cline{2-3}
  & \textbf{success rate} & \textbf{relative path length} \\ \hline \hline
\textbf{LIAN-5} & 87\% & 1.14 \\ \hline
\textbf{Our Algorithm} & 98\% & 1.06 \\ \hline
\end{tabular}
}
\end{center}
\end{table}

\section{Comparing Performance with Brute-force Path Planning}

At one extreme, we have our algorithm that prunes the feasible set of
solutions in a greedy way, and converges fast to an approximate
solution. At the other extreme, we can brute-force enumerate all the
solutions within the feasible set, and then check how many iterations it
takes during the enumeration, \textbf{on an average} across multiple
enumerations/test cases, before we locate an optimal solution. Obviously, a
lot depends on how the enumeration is performed. If we perform the
enumeration in some efficient way, then comparing the performance of our
algorithm to such efficient brute-force solution-finding algorithm leads to
establishment of \textbf{best-case} relative performance of our algorithm.
Such relative performance, in turn, gives a pessimistic lower bound to our
algorithm, and actual performance is expected to be better than that
figure of merit.

\subsection{Algorithm for Brute Force Path Planning}
This algorithm is a small improvement over planning algorithm of \cite{victor_thesis}.
We model our brute-force path planning algorithm on the lines of
intuitive limited backtracking as described in section \ref{coverage_sec}.
We do not perform the most naive way of brute force path planning, that is,
evolving each path independently, end-to-end, and then testing each of them
for being compliant to the conditions of our problem. We avoid doing that
so that we can have a more realistic comparison (average case performance
gain as against worst case performance gain). Similarly, to 
compare one variety of apple with another variety of apple, not 
orange, we use Theta* to evolve brute force path during each iteration of 
brute force algorithm.

We first use basic Theta* to find a path from source node to goal node,
that does not cross any obstacle. Next, we trace the solution path from
source node to goal node, and at each intermediate node, check for all four
constraints that are specific to our problem. If at a certain intermediate
node, if one or more constraints are found to be violated, we start the
search for the next optimal solution in its vicinity. That is, the 
sequence of nodes in current path from source till 
some node in vicinity of 
the specific intermediate node are retained in the same
sequence, certain nodes immediately 
following the intermediate node in the current path
 blocked out, and 
alternative, remaining neighbors to the current node are re-explored to 
find another optimal path in certain scenarios. Which node to 
backtrack to, so that nodes from source till the backtracked node are 
retained, is explained below.

\begin{figure*}[!h]
\begin{center}
\subfloat[Failure 
while previous 
turn is outside]{\label{fail_out}\includegraphics[scale=.38]{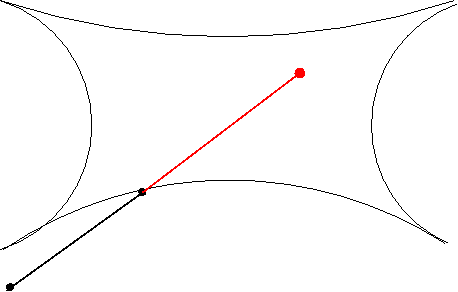}}
\qquad
\subfloat[Failure 
while previous 
turn is inside]{\label{fail_in}\includegraphics[scale=.38]{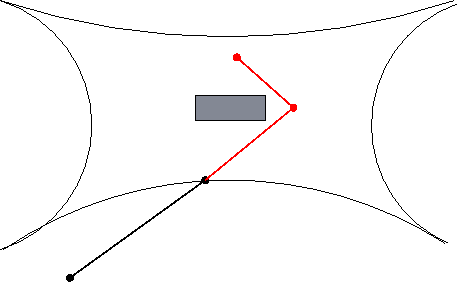}
}
\caption{Failure of Coverage Hole Storage Constraint}
\label{storage_fail_fig}
\end{center}
\end{figure*}

To locate a node where one of the multiple constraints is 
failing, we need to test for compliance of 
each constraint at each node in the current path, in some priority order.
The priority order, chosen intuitively in decreasing 
fashion, is maximum storage constraint (c.f. section~\ref{max_store_sec}) 
followed by maximum turning angle constraint (c.f. 
section~\ref{max_tangle_sec}) followed by minimum route length constraint 
(c.f. section~\ref{min_rleg_sec}). The first check, that of storage 
constraint within coverage hole, can fail in two 
ways, as discussed in the Fig.~\ref{storage_fail_fig}. In the first 
case, there exists no turn between the node of failure for storage 
constraint, and the node for entry into that specific hole, in the 
current path. In the second case, a turn exists between the same two 
above-mentioned nodes. Both of these failure cases need to be treated  
separately. In the first case,
the entire line segment up to the node of failure in the coverage hole 
need to be stored in a separate ``to be blocked'' list, whose reason is 
explained in section~\ref{block_seg_sec}. Further, 
the evolution of next path backtracks to the entry node on boundary of the current coverage 
hole, and resumes from there.
In the latter case, the backtracking needs to happen only up to the 
previous turn (parent node to the node of failure). From there, 
evolution of next path will resume, blocking 
all the nodes on the path segment between the node of failure and its parent.

To start next evolution of path in next 
iteration from a backtracked node, one needs all the information that was 
available in the heap when the same point  was traversed in current 
iteration. It is hence required that the open list of nodes, as it
existed when the current path was evolving while its partial path head
was at the current node, be restored. Only the specific neighbor of the 
node at restoring point that is blocked, has to be removed from the open 
list. Since the open list is heap sorted, it is required to be ensured 
that the sorting order of such list is also restored. To 
achieve that, we store the open list as it exists in the current 
iteration, at each node, so that it can be restored for the next 
iteration if needed. This will increase the memory requirement which 
again emphasis the difficulty to use plain Theta* for these specific 
cases.

\begin{figure}[!h]
\begin{center}
\includegraphics[scale=.6,keepaspectratio=true]{./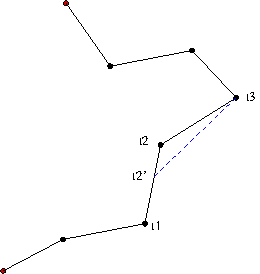}
\end{center}
\caption{Backtracking-based Similar Path Evolution}
\label{brute_bb_path_fig}
\end{figure}

If the current path is 
found to be compliant to the coverage hole storage constraint, then it 
must be checked for the other two constraints as per the priority order. 
Since the action to be taken if at any node, either of the remaining two 
constraint fails, is \textbf{same}, we explain testing for these 
constraints in a \textbf{common} way in the following. Instead of going 
back all the way back to the start node, we can do a local check in the 
vicinity of the the  intermediate node and find any \textit{similar} path 
that may satisfy all the constraints. To find such a similar path, the path in the current 
iteration is 
backtracked to one of the ``last neighbors'' of the node at which the 
constraint failed (referred as \textbf{t2}), one step at a time. From that 
neighbor, it is checked if a path can be evolved to the successor of the 
node at which constraint failed (referred as \textbf{t3}). If we cannot 
find a neighbor and rescind till the parent of \textbf{t2} (referred 
as \textbf{t1}),then we simply forget about \textbf{t3}, block all 
the nodes on the path segment between \textbf{t2} 
and \textbf{t1}, 
and proceed normally to find alternative path. If the algorithm is able to 
find some intermediate node that satisfies all the other constraints 
(minimum route length, maximum angle and line of sight ), we add a path 
segment between this intermediate node and \textbf{t3}, and allow the 
testing of this modified path to proceed from node \textbf{t3} onwards. 
The entire next iteration is depicted in Figure \ref{brute_bb_path_fig}.

\subsubsection{Blocking of Path Segments}
\label{block_seg_sec}
Since we use Theta* as the kernel of brute force algorithm, we also need to
take care of other ways via which the path evolved in current iteration
relapses and coincides with the path evolved and discarded in the previous
iteration. To understand that, one must have a look at
Fig.~\ref{replace_path1_fig} and \ref{replace_path2_fig}.
 
\begin{figure}[!h]
\begin{center}
\includegraphics[scale=.9,keepaspectratio=true]{./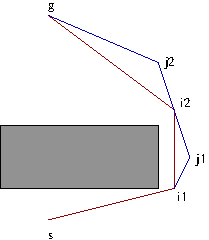}
\end{center}
\caption{Prohibited Brute Force Path Evolution: Case 1}
\label{replace_path1_fig}
\end{figure}

As the brute force path planning for the current iteration evolves from
e.g. node $i_1$ in the Theta* way, it is possible that ends up coinciding
with the previous iterations' path. If that happens, that implies that in
both paths, all segments, including the segment $i_1-i_2$ are common and
overlaid. Since the neighbor of $i_1$ was blocked before the current
iteration started, this perfect overlay is only possible when 
line-of-sight based merging happens. That is, the path of current 
iteration progresses via $i_1-j_2$ and $j_2-i_2$ segments, which in turn 
get triangulated and shorted in lack of any obstacle along segment 
$i_1-i_2$. Lack of obstacle is implied since otherwise, that segment could 
not have been part of the path evolved in previous iteration. To avoid 
such triangulation to happen, one must not
only block the immediate neighbor of $i_1$ in previous iteration, but also
the immediate segment: $i_1-i_2$ as well. In such case, during any future
iterations beyond the previous iterations, such segments cannot be exactly
overlaid on any new path. With respect to Fig.~\ref{replace_path1_fig},
this means that the red-colored segment should be blocked out as well, at
the end of previous iteration. The path is then forced to evolve and
complete as shown in blue color in the same illustration. It is possible
for such blocked red segment to be a proper subset of a longer blue segment
however, because in that case, the enumeration of paths of both iterations,
as successive turning points, will not exactly coincide.

The path coincidence in presence of a coverage hole can be avoided much in
the same way. With respect to Fig.~\ref{replace_path2_fig}, assume that in
the previous iteration, the red-colored path violated the storage
constraint related upper threshold. This in turn implies that the 
path segment of the red path within the coverage hole must not be
retraced in the next iteration's path evolution. Let us assume a convention
to represent such a path segment by the node from which the path evolution
in the previous iteration \textit{entered} into the specific coverage hole.
Hence, in the current iteration, the path will start evolving from node
$e_1$, after one blocks out the neighbor of $e_1$ as found in the previous
iteration. To follow the same blocking mechanism, not only we block the
above-mentioned neighboring node of $e_1$, we also block the path segment
$e_1-i_1$, belonging to the path of the previous iteration. This will
ensure that Theta* does not lead to any triangulation of the kind $e_1-j_1$
and $j_1-i_1$ in the current iteration, thus progressing in the same
way/path as in the previous iteration. Instead, on blocking such
(red-colored) segment $e_1-i_1$, the path evolution in current iteration is
forced to proceed as sequence of non-overlapping $e_1-j_1$, $j_1-j_2$ and
$j_2-g$ segments.

\begin{figure}[!h]
\begin{center}
\includegraphics[scale=.6,keepaspectratio=true]{./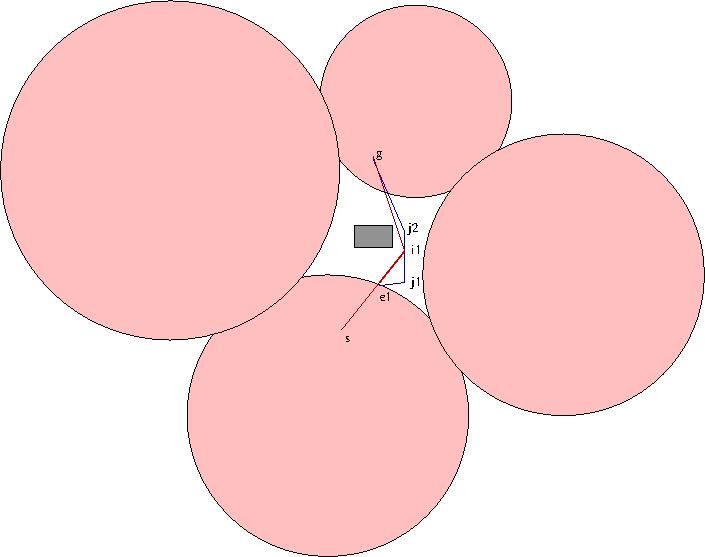}
\end{center}
\caption{Prohibited Brute Force Path Evolution: Case 2}
\label{replace_path2_fig}
\end{figure}

\subsection{Comparison Results}
Comparison of performance of our proposed algorithm was done with brute 
force algorithm, specifically designed for this purpose as explained 
earlier. The comparison was done by executing both algorithm for a 
fixed, same suite of test cases, and observing the set of their execution 
time for the suite. The test cases were designed so as to represent 
different type of terrains and obstacles. Backtracking is inherent 
in both the algorithms and is not deemed to be representing another 
separate iteration. We further assume that the per-iteration time taken 
by both algorithms is similar and comparable, since the distance between 
goal and source node is same for both, and the backtracking strategy is 
similar. In such a case, it is easy to observe that at a coarse level of 
measurement, our algorithm takes just 1 iteration, while the brute force 
testing algorithm was found to take almost 9-10 iterations even for simple 
test cases. For the test cases having at least 2 coverage holes without 
any obstacles, the brute-force algorithm was found to take nearly 50 
iterations. One must note that the coverage holes which we simulated were 
very small (approximately 30 cell size each), in comparison to the size of 
holes that are expected to occur during real deployment.

Generalizing the above observations, it can further be proven that 
whenever the size of the holes is large,the no. of iterations will 
increase significantly. The intuition behind the proof is as follows. 
Figure \ref{bruteIteration_fig} shows the series of paths that are evolved 
over iterations by the brute force algorithm. In that figure, only some 
intermediate paths are drawn for better representation. The dotted black 
lines represent the paths evolved in initial iterations that are not 
satisfying the coverage hole constrain. The green, non-dotted line is the 
path obtained after satisfying the coverage hole constraint, since our 
brute force algorithm is not fully dumb but evolves alternative paths by 
making the current path compliant to constraints in a priority order, 
over successive iterations. For this line, while top priority constraint 
is satisfied, the other constraints are found to fail. Hence in the 
vicinity of the intermediate node, brute force algorithm will search for 
some node that can satisfy other constraints. Final correct path is 
eventually evolved, and is shown in blue color. Given the grid size and 
the no. of permutations involved in taking short detour via neighbors at 
non-compliant intermediate nodes, it is imperative if the grid size is 
large, such sequence of path evolution will take many more iterations to 
converge. Iterations will be much more when the no. of coverage holes and 
the obstacles increase beyond what was the maximum number we were able to 
test with. If we were to compare our algorithm with a \textbf{pure} brute 
force algorithm, in which random paths are generated between the source 
and the goal nodes, tested for compliance of constraints, and having no 
correlation between the path as a sequence of segments between two 
successive iterations, then it is again imperative that the no. of 
iterations taken will be even higher. Hence the time performance of our 
algorithm is better than a brute force baseline algorithm in order of tens.

\begin{figure}[!h]
\begin{center}
\includegraphics[scale=.6,keepaspectratio=true]{./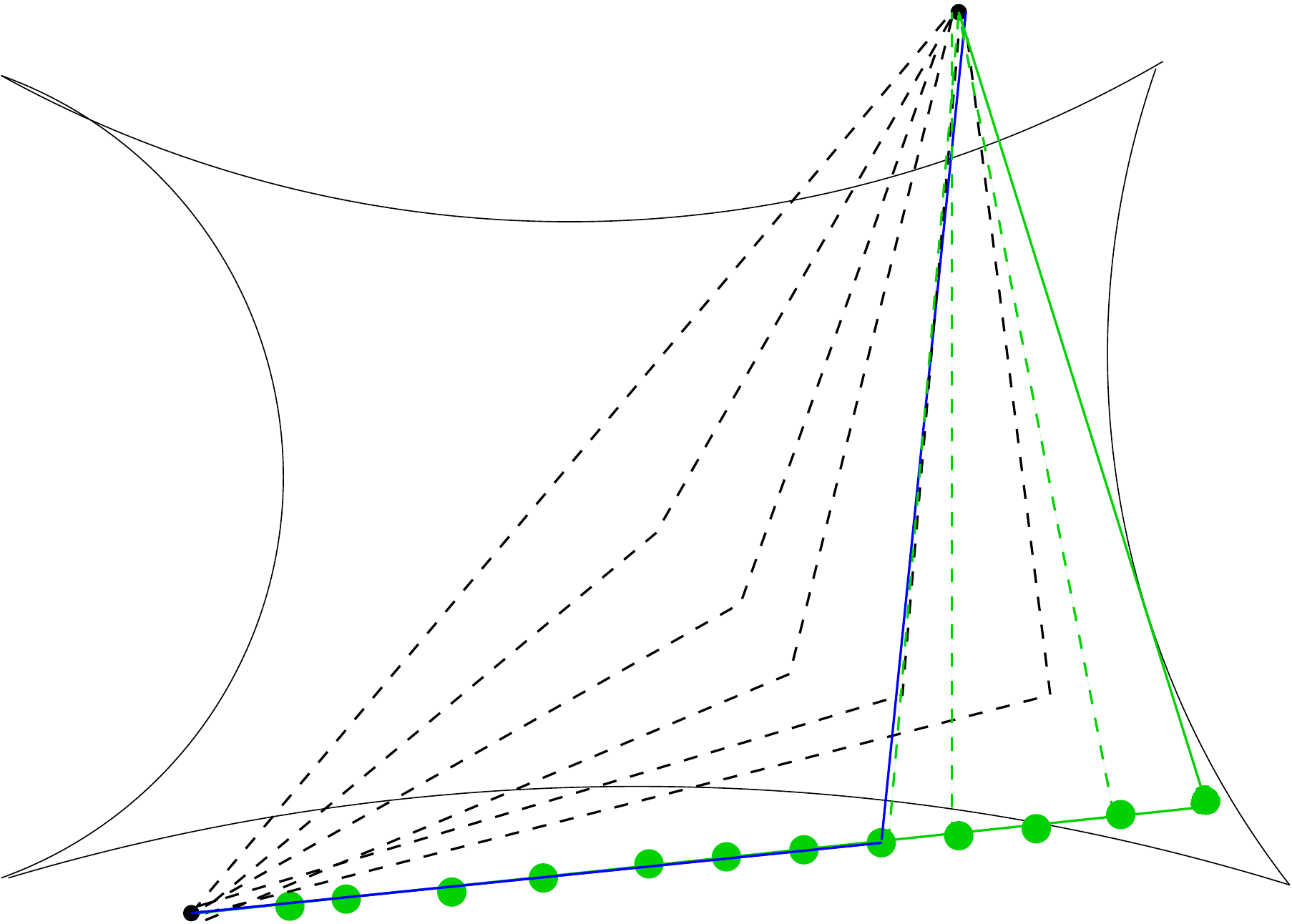}
\end{center}
\caption{Depiction of Iterative Brute Force Evolution of Conformant Path}
\label{bruteIteration_fig}
\end{figure}

\section{Functional Testing}

\begin{figure}[!h]
\begin{center}
\includegraphics[scale=.28,keepaspectratio=true]{./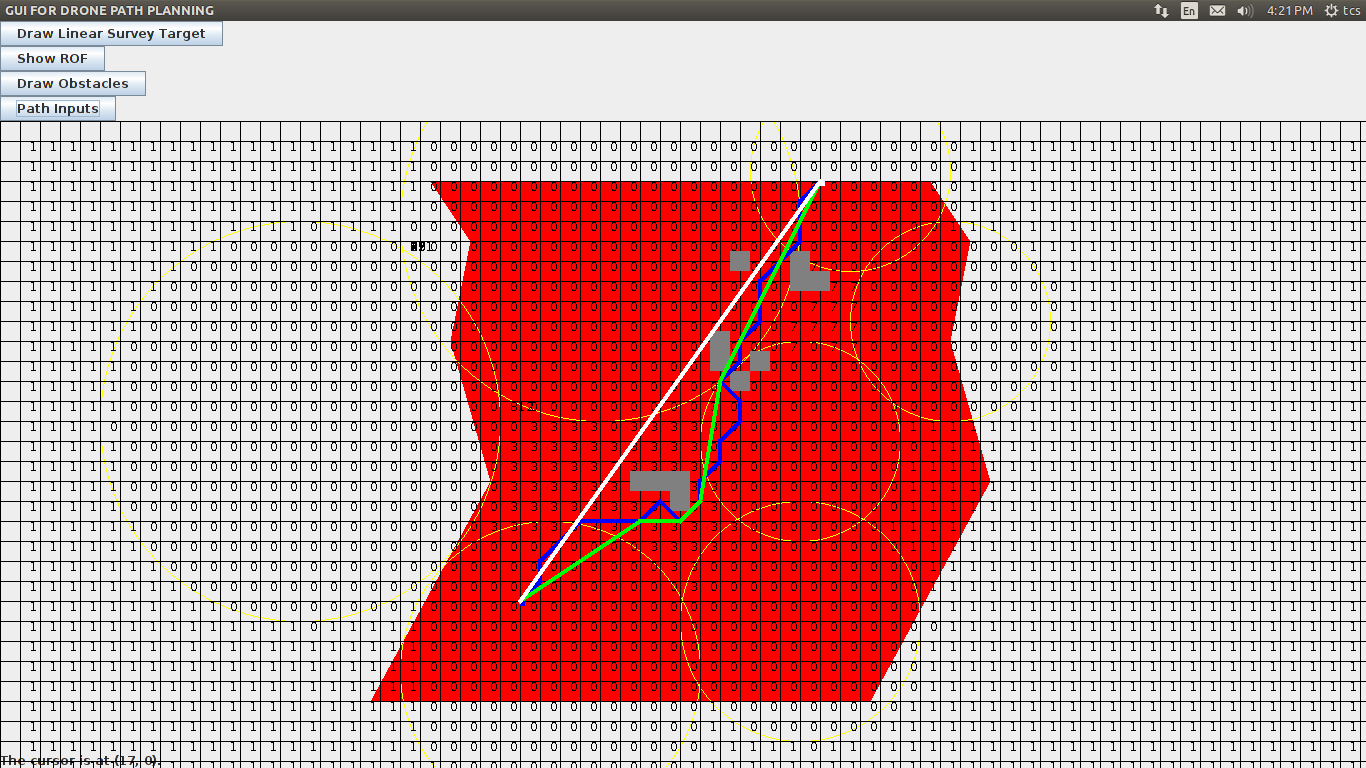}
\end{center}
\caption{Illustration 1 of Functional Test Case}
\label{ill1_fig}
\end{figure}

\begin{figure}[!h]
\begin{center}
\includegraphics[scale=.28,keepaspectratio=true]{./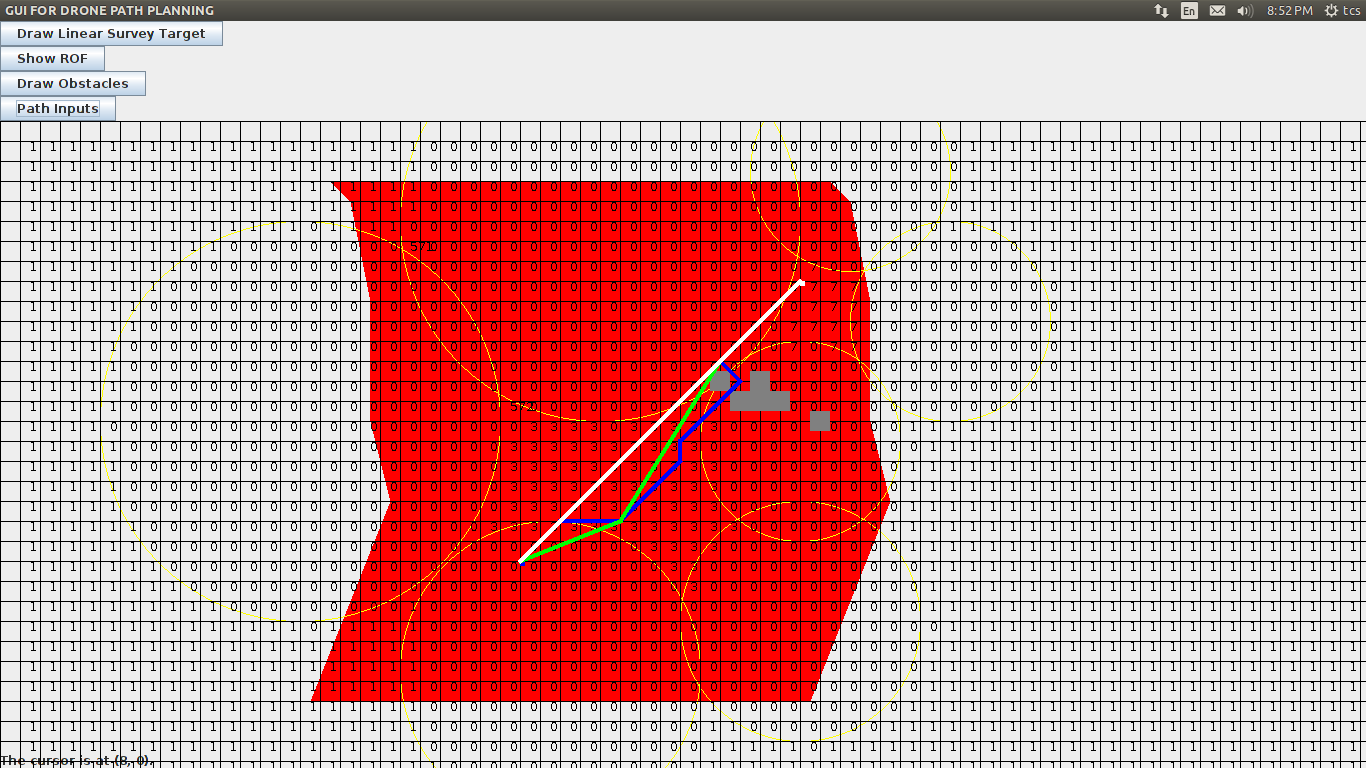}
\end{center}
\caption{Illustration 2 of Functional Test Case}
\label{ill2_fig}
\end{figure}

\section{Simulation Environment}
To be able to evaluate, we
had to design a more comprehensive simulation environment. This simulation
environment is written in Java as a single-threaded application. The simulation
environment encompasses a regular grid-based planning environment. It is
designed to allow placement of
obstacles of any shape and size, as well as coverage holes if
needed, at any
location. Further, they can be placed at any relative geometric configuration and proximities with respect to
one-another. Such freedom entails the obstacles to
be placed partially overlapping, fully ensconced within or fully outside one or
more coverage holes. The simulation environment further allows specification of
source and goal nodes at arbitrary locations, including being
within a hole or outside a hole.

\section{Simulation Results}
The additionally evaluated scenarios included permutations of all of the above-mentioned
degrees of freedom. Though not
exhaustive (around 47 testcases), it did help us in locating many
corner cases and checking our algorithm's readiness to handle those cases.
Illustrative evolution of such paths are depicted in Figs.~\ref{ill1_fig}
and \ref{ill2_fig}. We describe the evolution with respect to the firs t
figure now.
In this figure, partial yellow circles depict the coverage regions,
while coverage holes are those regions which aren't part of any circle. The
grey rectangles depict obstacles in various configurations. The
red region depicts the flight planning corridor for a infrastructure that
hypothetically runs along either the right or the left boundary of the
region. The white path depicts the continuous shortest path, the green path
depicts the shortest path evolved by our algorithm, and the blue path
depicts the sequence of nodes that were visited, neighbor after neighbor,
to evolve the final, green path. In the bigger scheme of things,
the white line/direct shortest path is not feasible, since remaining
constraints (discussed earlier in section \ref{model_sec}), including the storage constraint modeling the coverage
requirement, have to be satisfied as well. Many more complex test scenarios were
covered other than the illustrative example here. The broader and
structured list of testcases is available on request.

\chapter{Conclusion}
In the research work so far, we have designed for the maximum communication
scenario while aerial path planning. To the best of our knowledge, such
scenario, though useful, has not been covered in aerial path planning
literature. We have also designed a deterministic backtracking algorithm
for angle constraint in presence of obstacles, and in case of storage
overflow in coverage holes. A design for scaling the solution for 3D path
planning has been devised. One useful thing that we learnt was that the
conversion between a scalar cost and the same being enforced as a
thresholded constraint is sometimes just a theoretical beauty.

Implementing Storage cost as constraint has made our algorithm too
complex, with too many corner cases. While it works, there is always the
possibility that another corner case is recognized up all of sudden on a
future date, and algorithm may need patching up. Hence a fresh
approach for direct biobjective path planning has also been designed.

In future, we are also planning to test our algorithm on Tiger Dataset of
2009. It is a Shortest Path Challenge dataset based on road networks,
provided by NSF(nodes of order of $10^5$ and beyond). This dataset has to
be augmented to contain coverage hole and no-fly zone information, before
we can use it as useful input to our algorithm.

\newpage
\phantomsection \label{bibliography}
\addcontentsline{toc}{chapter}{References}
\bibliography{ref}

\begin{thebibliography}{10}

\bibitem{theta*_pap}
Kenny Daniel, Alex Nash, and Sven Koenig.
\newblock Theta*: Any-angle path planning on grids.
\newblock {\em Journal of Artificial Intelligence Research}, 39:533--579, 2010.

\bibitem{cov_pplan_pap}
Yan Li, Hai Chen, Meng~Joo Er, and Xinmin Wang.
\newblock Coverage path planning for \{UAVs\} based on enhanced exact cellular
  decomposition method.
\newblock {\em Elsevier Journal on Mechatronics}, 21(5):876--885, 2011.

\bibitem{ncc_pap}
H.~Sharma and P.~Balamuralidhar.
\newblock {A Communication Approach for Aerial Surveillance of Long Linear
  Infrastructures in Non-urban Terrain}.
\newblock In {\em Twenty First National Conference on Communications}, pages
  1--6, 2015.

\bibitem{solar_uav}
QinetiQ~Group PLC.
\newblock {QinetiQ files for three world records for its Zephyr Solar powered
  UAV}.
\newblock
  \url{http://www.qinetiq.com/media/news/releases/Pages/three-world-records.aspx},
  August 2010.
\newblock Press Release.

\bibitem{ppu}
S.A. Bortoff.
\newblock {Path planning for UAVs}.
\newblock In {\em Proceedings of the 2000 American Control Conference}, pages
  364--368, 2000.

\bibitem{ghaffar_2014_pap}
Alireza Ghaffarkhah and Yasamin Mostofi.
\newblock Dynamic networked coverage of time-varying environments in the
  presence of fading communication channels.
\newblock {\em ACM Transactions on Sensor Networks}, 10(3):45:1--45:38, 2014.

\bibitem{goerzen2009survey}
Chad Goerzen, Zhaodan Kong, and Bernard Mettler.
\newblock {A Survey of Motion Planning Algorithms from the Perspective of
  Autonomous UAV Guidance}.
\newblock In {\em 2nd International Symposium on UAVs}, pages 65--100, 2009.

\bibitem{victor_thesis}
Victor Roman.
\newblock {Turn Constrained Path Planning Problems}.
\newblock Master's thesis, University of Nevada at Las Vegas, 2006.

\bibitem{turn_transform_pap}
Ali Boroujerdi and Jeffrey Uhlmann.
\newblock An efficient algorithm for computing least cost paths with turn
  constraints.
\newblock {\em Information Processing Letters}, 67(6):317--321, 1998.

\bibitem{turn_2010_pap}
L.~Techy, C.A. Woolsey, and K.A. Morgansen.
\newblock {Planar path planning for flight vehicles in wind with turn rate and
  acceleration bounds}.
\newblock In {\em IEEE International Conference on Robotics and Automation},
  pages 3240--3245, 2010.

\bibitem{angle_pplan_pap}
Konstantin Yakovlev, Egor Baskin, and Ivan Hramoin.
\newblock {Grid-Based Angle-Constrained Path Planning}.
\newblock In {\em KI 2015: Advances in Artificial Intelligence}, volume 9324 of
  {\em Lecture Notes in Computer Science}, pages 208--221. Springer Verlag,
  2015.

\bibitem{haz_pap}
Jimmy Krozel, Changkil Lee, and Joseph S~B Mitchell.
\newblock Turn-constrained route planning for avoiding hazardous weather.
\newblock {\em ATCAI Air Traffic Control Quarterly}, 14(2):159--182, 2006.

\bibitem{windy_plan_pap}
Jose~Alfredo Guerrero and Yasmina Bestaoui.
\newblock {UAV path planning for structure inspection in windy environments}.
\newblock {\em Journal of Intelligent \& Robotic Systems}, 69(1-4):297--311,
  2013.

\bibitem{fewer_turns_pap}
Hu~Xu, Lei Shu, and May Huang.
\newblock {Planning Paths with Fewer Turns on Grid Maps}.
\newblock In {\em AAAI International Symposium on Combinatorial Search}, pages
  193--201, 2013.

\bibitem{ghaffar_2011_pap}
A.~Ghaffarkhah and Y.~Mostofi.
\newblock {Communication-aware Surveillance in Mobile Sensor Networks}.
\newblock In {\em IEEE American Control Conference}, pages 4032--4038, 2011.

\bibitem{ghaffar_2011_motion_pap}
A.~Ghaffarkhah and Y.~Mostofi.
\newblock Communication-aware motion planning in mobile networks.
\newblock {\em Automatic Control, IEEE Transactions on}, 56(10):2478--2485,
  2011.

\bibitem{adaptive_muav_pap}
Jerome Le~Ny, A.~Ribeiro, and G.J. Pappas.
\newblock Adaptive communication-constrained deployment of unmanned vehicle
  systems.
\newblock {\em Selected Areas in Communications, IEEE Journal on},
  30(5):923--934, 2012.

\bibitem{zavlanos_pap}
Y.~Kantaros and M.M. Zavlanos.
\newblock {Communication-aware Coverage Control for Robotic Sensor Networks}.
\newblock In {\em IEEE Annual Conference on Decision and Control}, pages
  6863--6868, 2014.

\bibitem{mostofi_2008_pap}
Y.~Mostofi.
\newblock {Communication-aware Motion Planning in Fading Environments}.
\newblock In {\em IEEE International Conference on Robotics and Automation},
  pages 3169--3174, May 2008.

\bibitem{ghaffar_2009_pap}
A.~Ghaffarkhah and Y.~Mostofi.
\newblock {Communication-aware Navigation Functions for Cooperative Target
  Tracking}.
\newblock In {\em IEEE American Control Conference}, pages 1316--1322, June
  2009.

\bibitem{ghaffar_2010_pap}
A.~Ghaffarkhah and Y.~Mostofi.
\newblock {Channel learning and Communication-aware Motion Planning in Mobile
  Networks}.
\newblock In {\em IEEE American Control Conference}, pages 5413--5420, 2010.

\bibitem{sigstren_est_pap}
Y.~Mostofi, M.~Malmirchegini, and A.~Ghaffarkhah.
\newblock {Estimation of Communication Signal Strength in Robotic Networks}.
\newblock In {\em IEEE International Conference on Robotics and Automation},
  pages 1946--1951, 2010.

\bibitem{magnus_thesis}
Magnus Lindhe.
\newblock {\em {Communication-Aware Motion Planning for Mobile Robots}}.
\newblock PhD thesis, KTH Royal Institute of Technology, 2012.

\bibitem{decentralized_mostofi_pap}
Yasamin Mostofi.
\newblock Decentralized communication-aware motion planning in mobile networks:
  An information-gain approach.
\newblock {\em Springer Journal of Intelligent and Robotic Systems},
  56(2):233--256, 2009.

\bibitem{3dcam_pap}
Yi-Ge Fu, Jie Zhou, and Lei Deng.
\newblock Surveillance of a 2d plane area with 3d deployed cameras.
\newblock {\em MDPI Journal of Sensors}, 14(2):1988--2011, 2014.

\bibitem{hyun_pap}
Baro Hyun, Pierre Kabamba, and Anouck Girard.
\newblock Optimally-informative path planning for dynamic bayesian
  classification.
\newblock {\em Springer Optimization Letters}, 6(8):1627--1642, 2012.

\bibitem{mostofi_2014_pap}
Yuan Yan and Y.~Mostofi.
\newblock {Efficient communication-aware Dynamic Coverage using Space-filling
  Curves}.
\newblock In {\em IEEE American Control Conference}, pages 964--971, 2014.

\bibitem{ghaffar_central_pap}
A.~Ghaffarkhah and Y.~Mostofi.
\newblock {Communication-aware Target Tracking using Navigation Functions -
  Centralized case}.
\newblock In {\em International Conference on Robot Communication and
  Coordination}, pages 1--8, 2009.

\bibitem{kaarthik_thesis}
Kaarthik Sundar.
\newblock {Motion Planning for Unmanned Aerial Vehicles with Resource
  Constraints}.
\newblock Master's thesis, Texas A\&M University, 2012.

\bibitem{ghaffar_2012_pap}
A.~Ghaffarkhah and Y.~Mostofi.
\newblock Path planning for networked robotic surveillance.
\newblock {\em IEEE Transactions on Signal Processing}, 60(7):3560--3575, 2012.

\bibitem{spline_pp_pap}
Dasol Lee and David~Hyunchul Shim.
\newblock {Path Planner based on Bidirectional Spline-RRT* for Fixed-wing
  UAVs}.
\newblock In {\em International Conference on Unmanned Aircraft Systems}, pages
  77--86, 2016.

\bibitem{2ph_pap}
Myung Hwangbo, James Kuffner, and Takeo Kanade.
\newblock {Efficient Two-phase 3D Motion Planning for Small Fixed-wing UAVs}.
\newblock In {\em IEEE International Conference on Robotics and Automation},
  pages 1035--1041, 2007.

\bibitem{pp_icuas16_pap}
Parikshit Maini and PB~Sujit.
\newblock {Path Planning for a UAV with Kinematic Constraints in the Presence
  of Polygonal Obstacles}.
\newblock In {\em International Conference on Unmanned Aircraft Systems}, pages
  62--67, 2016.

\bibitem{3dhard_pap}
John Canny and John Reif.
\newblock {New Lower bound Techniques for Robot Motion Planning Problems}.
\newblock In {\em IEEE Annual Symposium on Foundations of Computer Science},
  pages 49--60, 1987.

\bibitem{mil_route_pap}
Johannes~O. Royset, W.~Matthew Carlyle, and R.~Kevin Wood.
\newblock {Routing Military Aircraft with a Constrained Shortest-path
  Algorithm}.
\newblock Public report, Naval Postgraduate School, 2007.

\bibitem{real_pp_pap}
R.J. Szczerba, P.~Galkowski, I.S. Glicktein, and N.~Ternullo.
\newblock Robust algorithm for real-time route planning.
\newblock {\em IEEE Transactions on Aerospace and Electronic Systems},
  36(3):869--878, 2000.

\bibitem{3d_pplan_pap}
Luca De~Filippis, Giorgio Guglieri, and Fulvia Quagliotti.
\newblock Path planning strategies for uavs in 3d environments.
\newblock {\em Springer Journal of Intelligent \& Robotic Systems},
  65(4):247--264, 2012.

\bibitem{lazy_theta_pap}
Alex Nash, Sven Koenig, and Craig Tovey.
\newblock {Lazy Theta*: Any-Angle Path Planning and Path Length Analysis in
  3D}.
\newblock In {\em AAAI International Conference on Artificial Intelligence},
  pages 147--154, 2010.

\bibitem{cell_pap}
Ting-Yun Chi, Yi~Ming, Tseng, Sy-Yen Kuo, and Chih-Chiun Liao.
\newblock {Civil UAV Path Planning Algorithm for Considering Connection with
  Cellular Data Network}.
\newblock In {\em International Conference on Computer and Information
  Technology}, pages 327--331, 2012.

\bibitem{3drisk_pap}
Luitpold Babel.
\newblock Three-dimensional route planning for unmanned aerial vehicles in a
  risk environment.
\newblock {\em Journal of Intelligent \& Robotic Systems}, 71(2):255--269,
  2013.

\bibitem{nocedal_book}
Jorge Nocedal and Stephen~J. Wright.
\newblock {\em Numerical Optimization}.
\newblock Springer Science+Business Media Ltd., 2006.

\bibitem{al_theta*_pap}
Hugh Osborne and Lukáš Chrpa.
\newblock Towards a trajectory planning concept: Augmenting path planning
  methods by considering speed limit constraints.
\newblock {\em Journal of Intelligent \& Robotic Systems}, 75:243--270, 2014.

\bibitem{aal_theta*_pap}
Peter Gregory and Lukáš Chrpa.
\newblock {Abstract Augmented Lazy Theta*: Applying Graph Abstractions to
  Trajectory Planning with Speed-Limit Constraints}.
\newblock In {\em 31st Workshop of the UK Planning \& Scheduling Special
  Interest Group}, pages 15--23, 2014.

\bibitem{pplan_graph_pap}
Luca~De Filippis and Giorgio Guglieri.
\newblock {Advanced Graph Search Algorithms for Path Planning of Flight
  Vehicles}.
\newblock In {\em Recent Advances in Aircraft Technology}, pages 157--193.
  Intech Open, 2012.

\bibitem{dy_star_pap}
Petrisor Parvu and Alexandru Parvu.
\newblock {Dynamic Star Search Algorithms for Path Planning of Flight
  Vehicles}.
\newblock In {\em International Workshop on Numerical Modeling in Aerospace
  Sciences}, pages 193--201, 2013.

\bibitem{3d_heur_pap}
Zhang Qi, Zhenhai Shao, Yeo~Swee Ping, Lim~Meng Hiot, and Yew~Kong Leong.
\newblock An improved heuristic algorithm for uav path planning in 3d
  environment.
\newblock In {\em International Conference on Intelligent Human-Machine Systems
  and Cybernetics}, volume~2, pages 258--261, 2010.

\bibitem{sparse_a*_pap}
Bo~bo~Meng and Xiaoguang Gao.
\newblock {UAV Path Planning Based on Bidirectional Sparse A* Search
  Algorithm}.
\newblock In {\em International Conference on Intelligent Computation
  Technology and Automation}, pages 1106--1109, 2010.

\bibitem{lowpenetration_pap}
Zhi-hong Peng, Jin-ping Wu, and Jie Chen.
\newblock Three-dimensional multi-constraint route planning of unmanned aerial
  vehicle low-altitude penetration based on coevolutionary multi-agent genetic
  algorithm.
\newblock {\em Journal of Central South University of Technology},
  18(5):1502--1508, 2011.

\bibitem{iciar_pap}
Hrishikesh Sharma, Tanima Dutta, Adithya V., and Balamuralidhar P.
\newblock {A Real-time Framework for Detection of Long Linear Infrastructural
  Objects in Aerial Imagery}.
\newblock In {\em International Conference on Image Analysis and Recognition},
  Lecture Notes in Computer Science. Springer Verlag, 2015.

\bibitem{dicta_pap}
Hrishikesh Sharma, Adithya V., Tanima Dutta, and Balamuralidhar P.
\newblock {Image Analysis-based Automatic Utility Pole Detection for Remote
  Surveillance}.
\newblock In {\em IEEE International Conference on Image Analysis and
  Recognition}, 2015.

\bibitem{a*_ref_pap}
Daniel Delling, Peter Sanders, Dominik Schultes, and Dorothea Wagner.
\newblock {Engineering Route Planning Algorithms}.
\newblock In {\em Algorithmics of Large and Complex Networks}, volume 5515 of
  {\em Lecture Notes in Computer Science}, pages 117--139. Springer Berlin
  Heidelberg, 2009.

\bibitem{incr_pi*_pap}
A.~Nash, S.~Koenig, and M.~Likhachev.
\newblock {Incremental Phi*: Incremental Any-Angle Path Planning on Grids}.
\newblock In {\em International Joint Conference on Artificial Intelligence},
  pages 1824--1830, 2009.

\bibitem{discrete_approx_pap}
Jongrae Kim and Jo{\~a}o~P. Hespanha.
\newblock {Discrete Approximations to Continuous Shortest-Path: Application to
  Minimum-Risk Path Planning for Groups of UAVs}.
\newblock In {\em IEEE International Conference on Decision and Control}, 2003.

\bibitem{dgraph_srch_pap}
M.~Garcia, A.~Viguria, and A.~Ollero.
\newblock Dynamic graph-search algorithm for global path planning in presence
  of hazardous weather.
\newblock {\em Springer Journal of Intelligent \& Robotic Systems},
  69(4):285--295, 2013.

\bibitem{angular_ucv_pap}
Hanguen Kim, Donghoon Kim, Jae-Uk Shin, Hyongjin Kim, and Hyun Myung.
\newblock Angular rate-constrained path planning algorithm for unmanned surface
  vehicles.
\newblock {\em Elsevier Journal of Ocean Engineering}, 84:37--44, 2014.

\bibitem{beamlet_pap}
O.~Arslan, P.~Tsiotras, and X.~Huo.
\newblock {Solving Shortest Path Problems with Curvature Constraints using
  Beamlets}.
\newblock In {\em IEEE/RSJ International Conference on Intelligent Robots and
  Systems}, pages 3533--3538, 2011.

\bibitem{esp_book}
Fajie Li and Reinhard Klette.
\newblock {\em Euclidean Shortest Paths}.
\newblock Springer Verlag, 2011.

\bibitem{anyangle_comp_pap}
T.~Uras and S.~Koenig.
\newblock {An Empirical Comparison of Any-Angle Path-Planning Algorithms}.
\newblock In {\em AAAI Symposium on Combinatorial Search}, pages 206--211,
  2015.

\bibitem{amit_note}
{Heuristics: Amit's Notes on Pathfinding}.
\newblock
  \url{http://theory.stanford.edu/~amitp/GameProgramming/Heuristics.html},
  1997.

\bibitem{kapoor_pap}
Sanjiv Kapoor, SN~Maheshwari, and Joseph~SB Mitchell.
\newblock An efficient algorithm for euclidean shortest paths among polygonal
  obstacles in the plane.
\newblock {\em Discrete \& Computational Geometry}, 18(4):377--383, 1997.

\bibitem{hershberger_pap}
John Hershberger and Subhash Suri.
\newblock An optimal algorithm for euclidean shortest paths in the plane.
\newblock {\em SIAM Journal on Computing}, 28(6):2215--2256, 1999.

\bibitem{anil_pap}
Lyudmil Aleksandrov, Anil Maheshwari, and J-R Sack.
\newblock Determining approximate shortest paths on weighted polyhedral
  surfaces.
\newblock {\em Journal of the ACM (JACM)}, 52(1):25--53, 2005.

\bibitem{ant_cmplx_pap}
Frank Neumann, Dirk Sudholt, and Carsten Witt.
\newblock {\em Computational Complexity of Ant Colony Optimization and Its
  Hybridization}, chapter~10.
\newblock Springer, 2009.

\end{thebibliography}

\end{document}